\newtheorem{proposition}{Proposition}
\newif\ifdraft
 \definecolor{jtgreen}{RGB}{20,150,60}
 \newcommand{\PF}[1]{{\color{red}{\bf pf: #1}}}
 \newcommand{\AM}[1]{{\color{blue}{\bf am: #1}}}
 \newcommand{\JT}[1]{{\color{jtgreen}{\bf jt: #1}}}
 \newcommand{\PF}[1]{}
 \newcommand{\AM}[1]{}
 \newcommand{\JT}[1]{}
\newcommand{\bE}[0]{\mathcal{E}}
\newcommand{\bG}[0]{\mathcal{G}}
\newcommand{\bR}[0]{\mathcal{R}}
\newcommand{\bW}[0]{\mathbf{W}}
\newcommand{\bx}[0]{\mathbf{x}}
\newcommand{\DPS}[0]{\textbf{DPPS}}
\newcommand{\RS}[0]{\textbf{RS}}
\newcommand{\US}[0]{\textbf{US}}
\newcommand{\EMOC}[0]{\textbf{EMOC}}
\newcommand{\MSTP}[0]{\textbf{MSTP}}
\newcommand{\QMIP}[0]{\textbf{MIP}}
\newcommand{\comment}[1]{}
\newcommand{\keywords}[1]{~\\ \noindent \textbf{Keywords:} #1}
\begin{document}

\title{Active Learning and Proofreading \\ for Delineation of Curvilinear Structures}
\author{
Agata Mosinska\thanks{Supported by the Swiss National Science Foundation.} \\ EPFL
\and
Jakub Tarnawski\thanks{Supported by ERC Starting Grant 335288-OptApprox.}  \\ EPFL
\and
Pascal Fua \\ EPFL 
\and
{\tt\small  \{agata.mosinska, jakub.tarnawski, pascal.fua\}@epfl.ch} 
}

\clearpage\maketitle
\thispagestyle{empty}

\begin{abstract}

Many state-of-the-art delineation methods rely on supervised machine learning algorithms. As a result, they require manually annotated training data, which is tedious to obtain.  Furthermore,  even minor classification errors may significantly affect the topology of the final result. In this paper we propose a generic approach to addressing both of these problems by taking into account the influence of a potential misclassification on the resulting delineation. In an Active Learning context, we identify parts of linear structures that should be annotated first in order to train a classifier effectively. In a proofreading context, we similarly find regions of the resulting reconstruction that should be verified in priority to obtain a nearly-perfect result. In both cases, by focusing the attention of the human expert on potential classification mistakes which are the most critical parts of the delineation, we reduce the amount of required supervision. We demonstrate the effectiveness of our approach on microscopy images depicting blood vessels and neurons.

\keywords{Active Learning, Proofreading, Delineation, Light Microscopy, Mixed Integer Programming}

\end{abstract}


\section{Introduction}

Complex and extensive curvilinear structures include blood
vessels, pulmonary  bronchi, nerve fibers  and neuronal networks  among others.
Many  state-of-the-art  approaches to  automatically  delineating  them rely  on
supervised  Machine Learning  techniques.  For  training purposes,  they require
\textit{annotated} ground-truth data  in large quantities to cover  a wide range
of  potential variations  due to  imaging artifacts  and changes  in acquisition
protocols.  For optimal performance, these variations must be featured
in the training  data, as they can produce drastic changes in appearance. Furthermore, no  matter how well-trained the algorithms are,  they will  continue to  make mistakes,  which must  be
caught by the user and corrected.  This is known as  {\it proofreading} -- a slow,
tedious  and expensive  process when  large amounts  of image  data or  3D image
stacks are involved,  to the point that  it is considered as  a major bottleneck
for applications such as neuron reconstruction~\cite{Peng11c}.


\begin{figure*}[]
\centering
\subfloat[]{\includegraphics[height=0.18\textwidth]{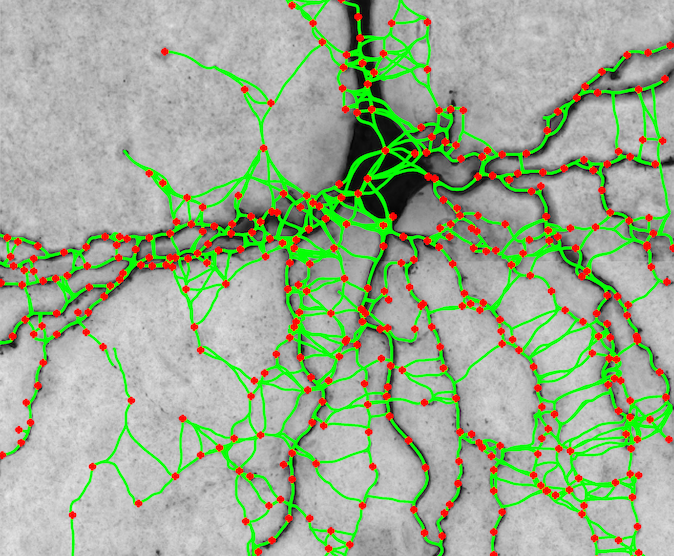}}
\hspace{0.1cm}
\subfloat[]{\includegraphics[height=0.18\textwidth]{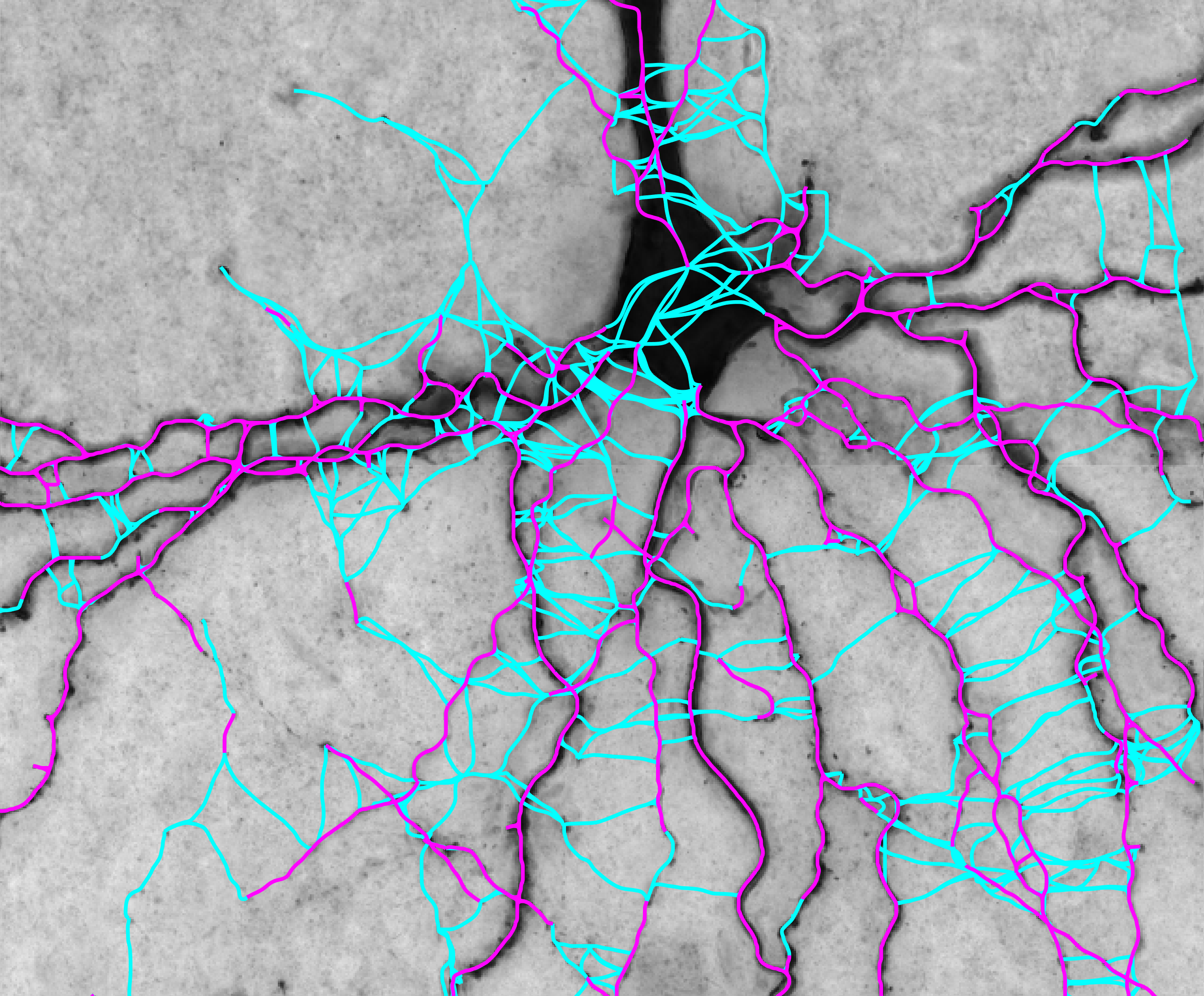}}
\hspace{0.1cm}
\subfloat[]{\includegraphics[height=0.18\textwidth]{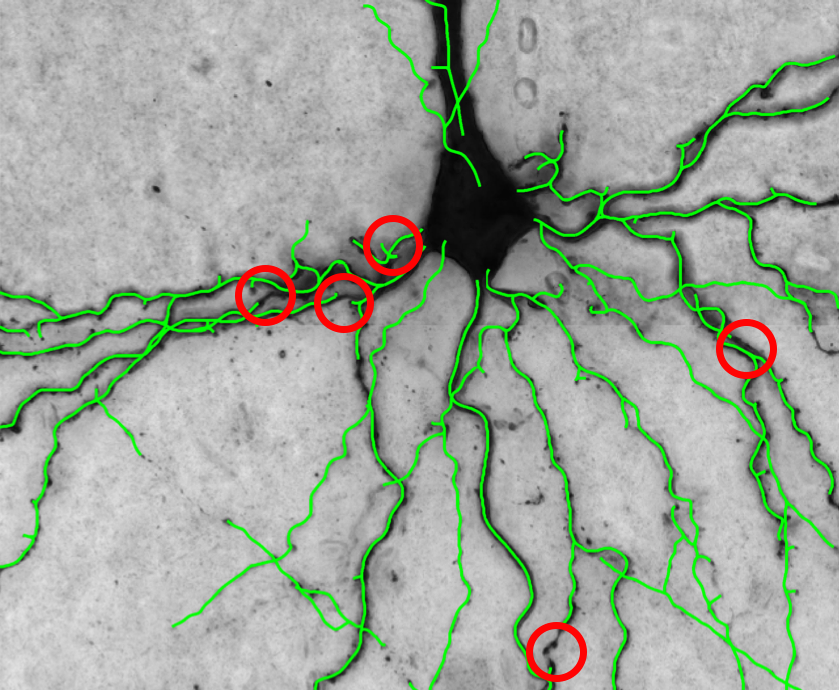}}
\hspace{0.1cm}
\subfloat[]{\includegraphics[height=0.18\textwidth]{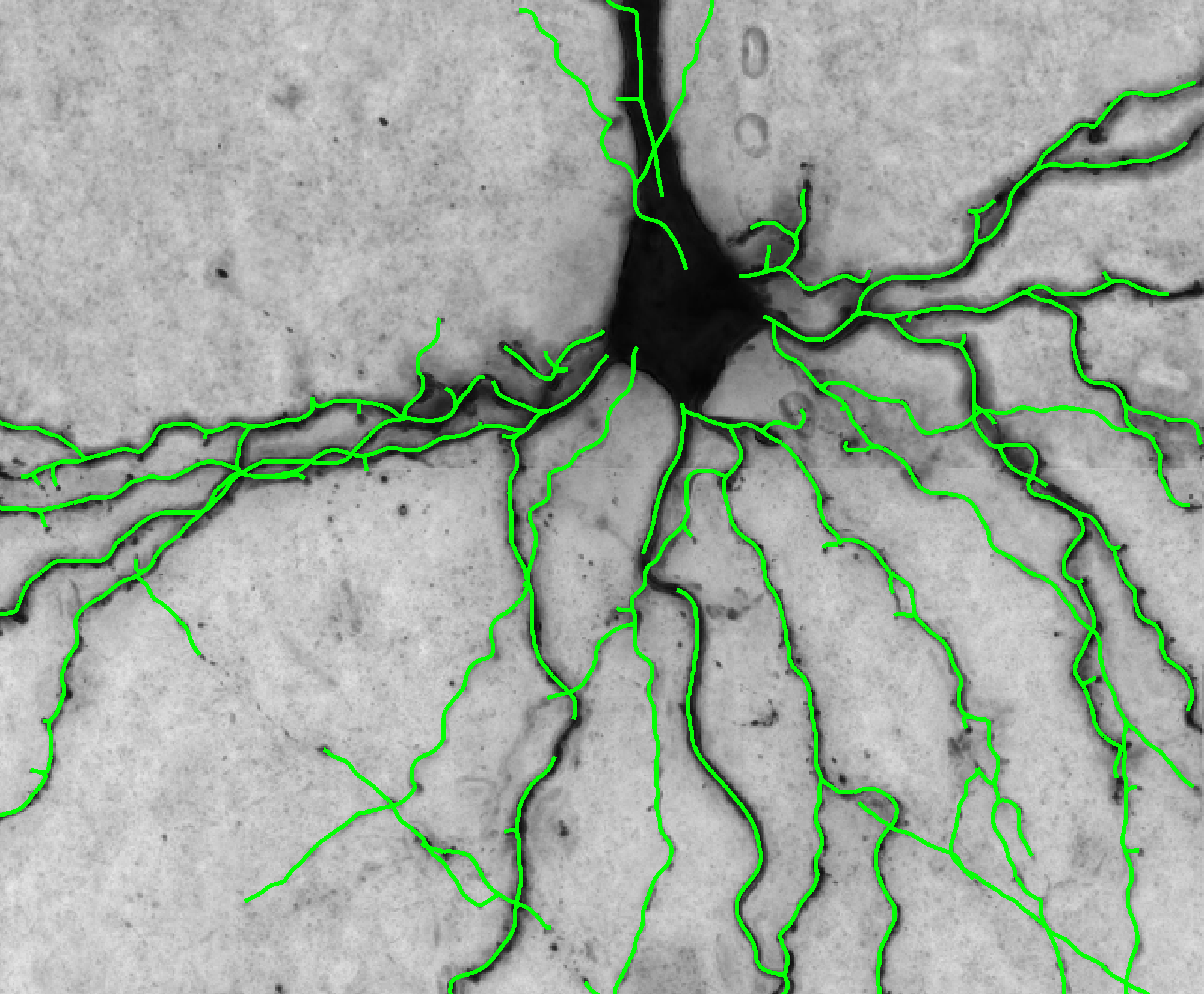}}
\vspace{-4mm}
\caption{Delineation  workflow.    (a)  Input  image  with   overcomplete  graph
  overlaid.  (b)  The
  high-probability  edges are  shown  in purple  and the  others  in cyan.   (c)~Automated delineation, with connectivity errors highlighted by red circles. (d)~Final result after proofreading.  All figures are best viewed in color.}
\label{fig:graph}
\vspace{-4mm}
\end{figure*}

\begin{figure}[]
  \centering
  \begin{tabular}{cc}
\includegraphics[height=0.23\textwidth]{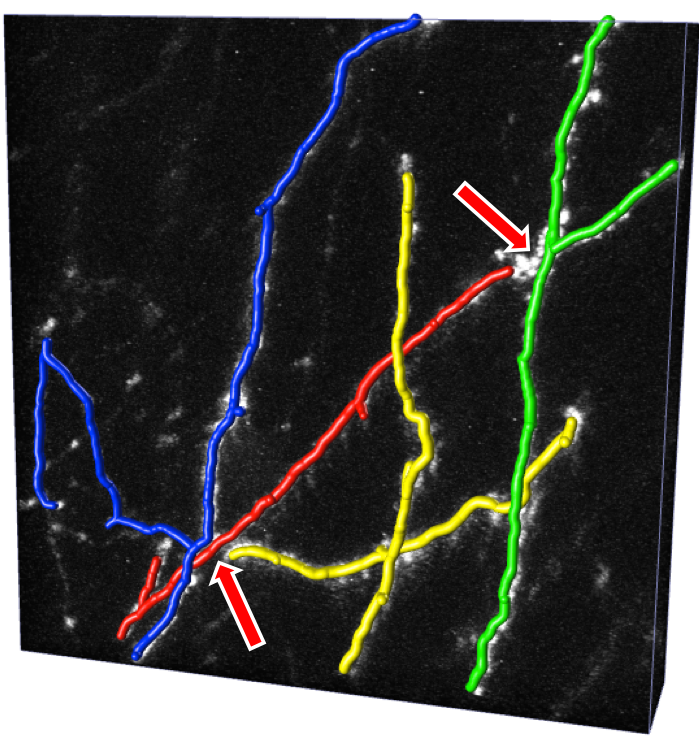}&
\includegraphics[height=0.23\textwidth]{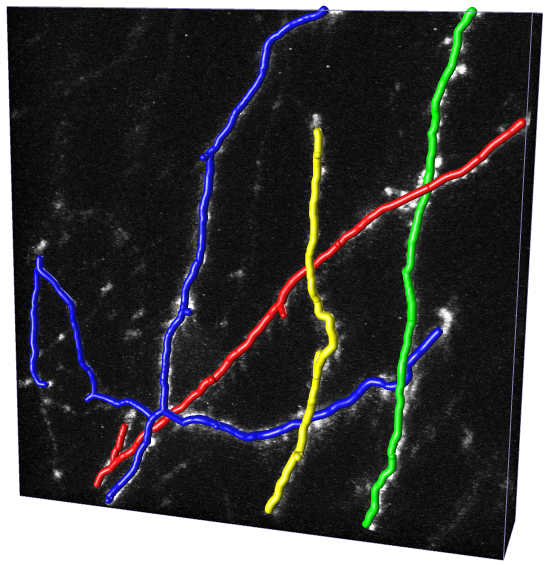}\\
(a)&(b)
  \end{tabular}
  \vspace{-4mm}
\caption{Misclassifying even a few edges may severely impact the final topology.
  (a)  The  two  edges indicated  by  the  red  arrows  are falsely  labeled  as
  negatives.  As  a result,  two pairs of unrelated branches (green and yellow) are merged. (b)~The true connectivity is recovered after correcting the two edges.}
\label{fig:mistakes}
\vspace{-4mm}
\end{figure}

In other  words, human  intervention is  required both  to create  training data
before   running  the   delineation  algorithm   and  to   correct  its   output
thereafter. Current  approaches to making  this less tedious focus  on providing
better  visualization  and   editing  tools  ~\cite{Dercksen14,Peng11c}.   While
undoubtedly useful, this is not enough.  We therefore propose an Active Learning
(AL)~\cite{Settles10} approach to direct the
annotator's attention to the  most critical samples. It takes into  account the expected  change in
reconstruction that can result from labeling specific paths. It can be  used both for fast
annotation purposes and, later, to  detect potential mistakes in machine-generated
delineations.

More     specifically,    consider     an     algorithm     such    as     those
of~\cite{Santamaria-Pang15,Montoya14,Turetken16a,Neher15,Peng11b},        whose
workflow is  depicted by  Fig.~\ref{fig:graph}.  It first  builds a  graph whose
nodes  are  points likely  to  lie  on the  linear  structures  and whose  edges
represent paths connecting them.  Then it assigns a weight to each edge based on
the  output of  a  discriminative  classifier. Since  the  result is  critically
dependent  on the  weights,  it  is important  that  the  classifier is  trained
well. Finally,  the reconstruction algorithm finds a  subgraph that  maximizes an objective (cost) function 
dependent on  the edge weights, subject to certain constraints. However,  even very small mistakes  can result in
very  different  delineations,  as  shown  in  Fig.~\ref{fig:mistakes}.  

Our main insight is  that the decision about which edges to annotate or proofread should be based on  their influence
on the cost of the network. Earlier methods either
  ignore the network topology  altogether~\cite{Freytag14} or only take  it into consideration
  locally~\cite{Mosinska16}, whereas we consider it globally. Our contribution
is therefore a cost- and topology-based criterion for detecting attention-worthy
edges.   We demonstrate  that this  can be  used for both AL  and proofreading,
allowing us to drastically reduce the required amount of human intervention 
  when used in conjunction with  the algorithm of~\cite{Turetken16a}. To make it
  practical for  interactive applications,  we also  reformulate the  latter to
  speed it up considerably
  -- it runs nearly in real-time and it can handle much larger graphs than~\cite{Turetken16a}.
  
The remainder of this paper is organized as follows.
First, in Section~\ref{sec:focus},
we describe our attention mechanism for selecting important edges in the delineation.
In Section~\ref{sec:active} we explain how this mechanism can be used for Active Learning and proofreading purposes.
Then, in Section~\ref{sec:mip}, we introduce a new, more efficient formulation of the state-of-the-art Mixed Integer Programming delineation algorithm that ensures fast and reliable reconstruction.
Finally, in Section~\ref{sec:results}, we compare the performance of our algorithm against conventional techniques.


\section{Attention Mechanism}
\label{sec:focus}

\subsection{Graph-Based Delineation}
\label{sec:delin}

Delineation algorithms usually start by computing a tubularity measure \cite{Law08,Turetken13c,Sironi16a}, which quantifies the likelihood  that a
tubular structure  is present  at a  given image  location.  Next,  they extract either
high-tubularity   superpixels  likely   to   be   tubular  structure   fragments
~\cite{Santamaria-Pang15,Montoya14} or  longer paths  connecting points
likely       to       be       on       the       centerline       of       such
structures~\cite{Gonzalez08,Breitenreicher13,Neher15,Turetken16a}.          Each
superpixel or path is treated as an edge $e_i$ of an over-complete spatial graph
$\bG$ (see Fig.~\ref{fig:graph}(a)) and is characterized by
an image-based feature vector $\bx_i$.  Let $\bE$  be the set of all such edges,
which is expected to  be a superset of the set $\bR$ of  edges defining the true
curvilinear  structure,  as  shown  in  Fig.~\ref{fig:graph}(d).  If  the  events of each edge $e_i$ being present in the reconstruction are assumed to be independent (conditional on the image evidence $\bx_i$), then the most likely subset $\bR^*$ is the
one minimizing
\begin{equation}
c(\bR) = \sum_{e_i\in \bR} w_i, \mbox{ with } w_i=-\log\frac{p(y_i = 1|\bx_i)}{p(y_i =
  0|\bx_i)} \; ,
\label{eq:ObjF}
\end{equation}
where $w_i  \in \bR$  is the  {\it weight} assigned  to edge  $e_i$ and $y_i$  is a
binary class label denoting whether $e_i$ belongs to the final reconstruction or
not.
This optimization is subject to certain geometric constraints;
for example, a state-of-the-art method presented  in~\cite{Turetken16a} solves a more  complex Mixed  Integer Program (\QMIP{}), which uses linear constraints to force the reconstruction to form a connected network (or a tree). As  described  in Section~1 of the  supplementary
material, we were  able to reformulate the original optimization  scheme and obtain major speedups which make it
practical even when delineations must be recomputed
often. There,  we also show that  it yields better
  results  than using  a  more basic method Minimum Spanning  Tree with  Pruning~\cite{Gonzalez08},
  while also being able to handle non-tree networks.
  Let us remark that finding the minimizing $\bR$ is trivial to parallelize.

The probabilities  appearing in Eq.~\ref{eq:ObjF} can  be estimated in
many ways.
A simple and effective one
is to  train a discriminative  classifier for
this purpose~\cite{Breitenreicher13,Montoya14,Turetken16a}.  However, the performance
critically depends on how well-trained the classifier  is.  A
few  misclassified edges  can produce  drastic topology  changes, affecting  the
whole  reconstruction, as  shown  in Fig.~\ref{fig:mistakes}.  In  this paper  we
address both issues with a single generic criterion.

\subsection{Error Detection}
\label{sec:error}

The key  to both fast proofreading  and efficient AL is to quickly
find potential mistakes, especially those that are critical for the topology. In this work, we take {\it critical mistakes} to mean erroneous
edge weights $w_i$ that result  in major changes to  the cost $c(\bR^*,\bW)$ of
the reconstruction. In other  words, if changing
a specific weight can significantly influence the delineation, we must ensure that the weight is correct. We
  therefore  measure this  influence, alter  the edge  weights accordingly,  and
  recompute the delineation.

\subsubsection{Delineation-Change Metric}
\label{sec:metric}

We  denote by  $\bR^*$ the  edge subset  minimizing  the objective (cost) function
$c(\bR, \bW) = \sum_{e_i \in \bR} w_i$ given a  particular set $\bW$ of weights assigned  to edges in
$\bG$.  Changing the weight $w_i$ of edge  $e_i$ to $w_i'$ will lead to a new graph
with optimal edge  subset  $\bR'_i$.  We  can  thus  define  a delineation-change metric, which evaluates the cost of changing the weight of an edge $e_i
\in \bE$:
\begin{equation}
  \Delta c_i = c(\bR^*,\bW) - c(\bR'_i,\bW') \; .
\label{eq:weightMetric}
\end{equation}
If $\Delta c_i >  0$, the cost has decreased; we can  conjecture that the overall
reconstruction benefits from this weight change and therefore the weight value may be worth investigating by the annotator as a potential mistake. The converse is true if $\Delta c_i
< 0$.  In other words,  this very simple  metric gives us  a way to  gauge the
influence of an edge weight on the overall reconstruction.

\subsubsection{Changing the Weights}
\label{sec:weights}

\begin{figure*}[]
\centering
\subfloat[]{\includegraphics[height=0.35\textwidth]{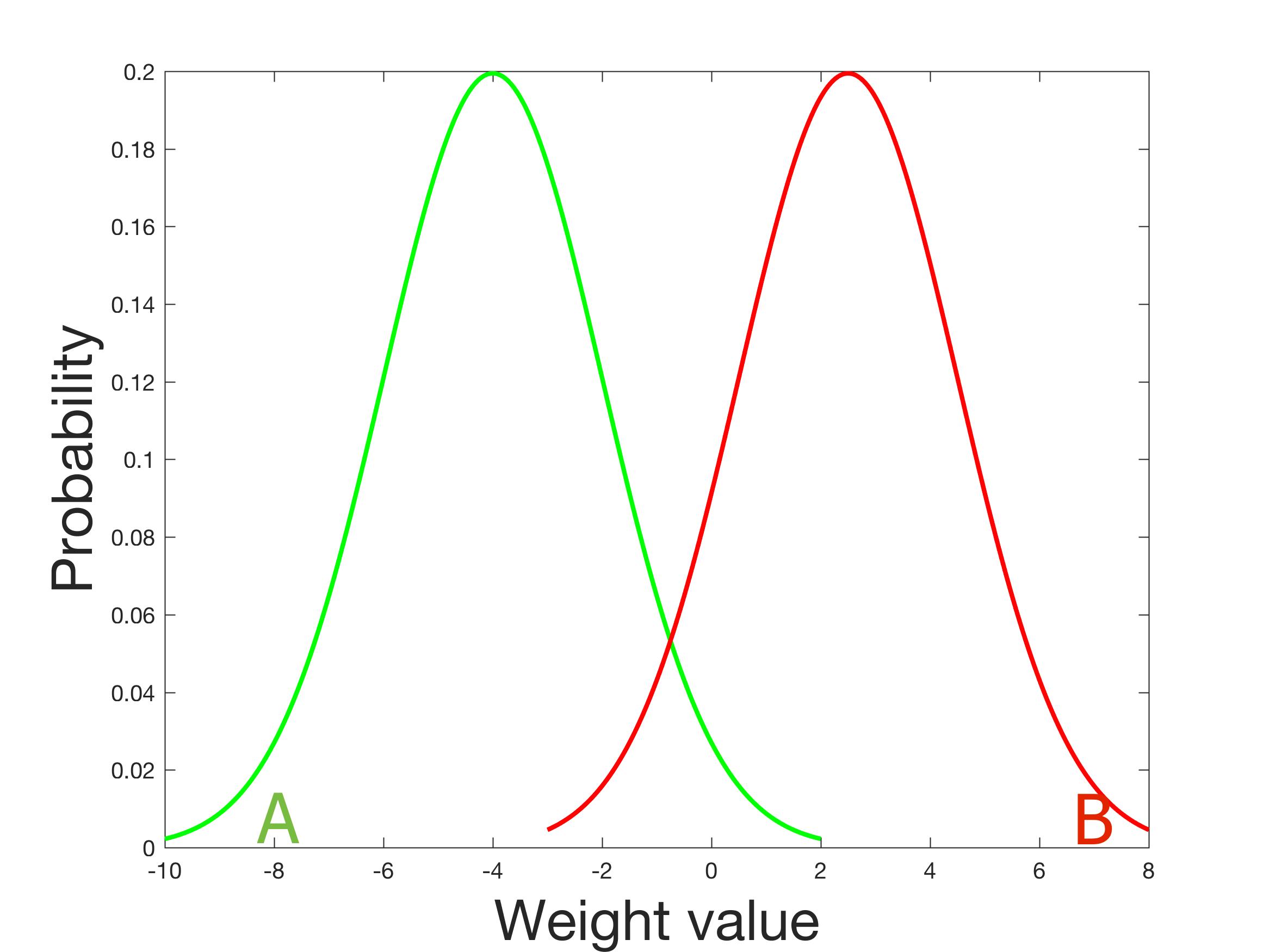}}
\subfloat[]{\includegraphics[height=0.35\textwidth]{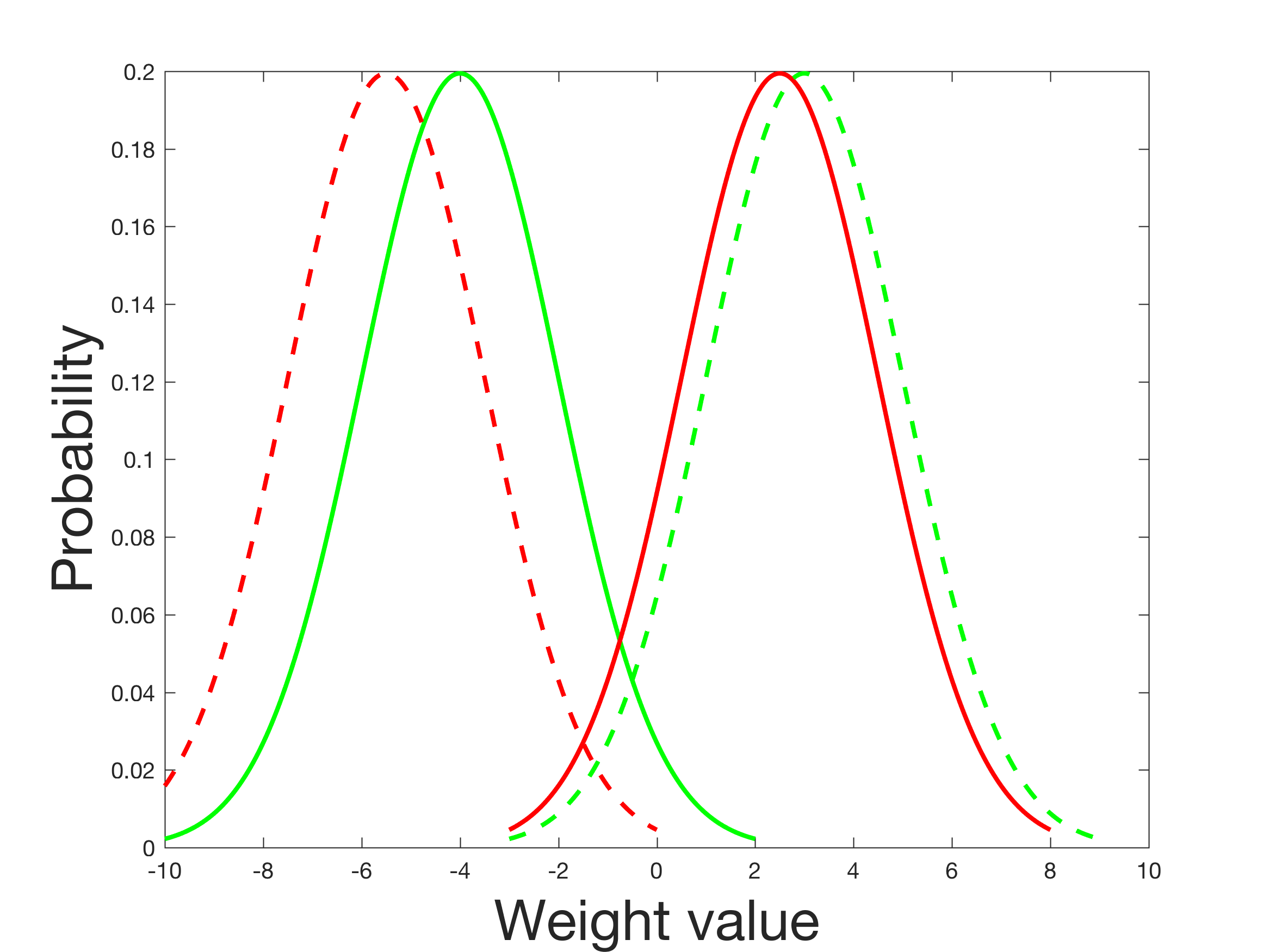}}
\caption{(a) Two Gaussian distributions corresponding to positive (green) and negative (red) classes of edges. (b) The effect of weight transformation; the original distributions are drawn with solid lines, while the corresponding distributions after the transformation are drawn with dashed lines. The described transformation causes ''swapping'' of the distributions corresponding to the two classes.}
\label{fig:WeightDistribution}
\end{figure*}

For  our  cost change criterion to  have practical  value, we  must
alter weights in such a way that $\Delta c_i$ is largest for edges which require
the opinion of an annotator. In practice, the weights of positive-class edges tend to follow a Gaussian distribution with negative mean and a variance such that few of them are positive values, as shown in Fig.~\ref{fig:WeightDistribution}(a). Similarly, negative edges follow a Gaussian distribution with positive mean, few of them being negative. As a result, most of the mistaken edges have $|w_i| \approx 0$.

 In order for our delineation-change metric to be informative, we must ensure that attention-worthy edges (probable mistakes) have high values of $\Delta c_i$. To achieve this, we must not only flip the sign of the weight (implying assigning it to the opposite class), but also increase the absolute value of likely mistakes. Without this, many of the mistakes with  $|w_i| \approx 0$ could be omitted due to smaller values of $\Delta c_i$ compared to edges with weights of higher absolute value, which are much less likely to be mistakes. 
 
The above requirements can be satisfied with the following transformation:
\begin{equation}
w'_i = \begin{cases} A + w_i & \mbox{if } w_i > 0, \\ B + w_i & \mbox{if } w_i
 < 0. \end{cases}
\label{eq:weightChange}
\end{equation}
It is equivalent to swapping the distributions corresponding to positive and negative edges, as shown in Fig~\ref{fig:WeightDistribution}(b).

We take $A$ and $B$ to  be the 10\% and
90\% quantiles of the weight distribution (for robustness to outliers). These are near-extreme values of the
weights for  the positive  and negative  classes respectively,  which we  use as
attractors for $w'_i$:  for small positive $w_i$  we want $w'_i$ to  be close to
$A$, and for negative ones to $B$ instead. The weight change is therefore likely to yield a significant $\Delta
c_i$ for probable mistakes. 
 
Finally, for  edges whose weight is  negative but which nevertheless do  not belong to
the  graph, we  take $\Delta  c_i$  to be  $w'_i$ to  ensure that it  is
positive and that more uncertain edges are assigned higher $\Delta c_i$.



\section{Active Learning and Proofreading}
\label{sec:active}

AL aims to train  a model with minimal user input by  selecting small subsets
of examples that are the most informative.
Formally, our  algorithm starts with a small set
of labeled edges $S_{0}$.  We then repeat the following steps: At iteration $t$,
we use  the annotated set of  edges $S_{t-1}$ to train  classifier $C_{t-1}$ and
select  one or  more  edges to  be labeled  by the  user  and added  to
$S_{t-1}$ to  form $S_{t}$. The  edge(s) we select  are those that  maximize the
criterion $\Delta c$ of Eq.~\ref{eq:weightMetric}.

By contrast,  proofreading occurs \emph{after}  the classifier  has been trained  and a
complete delineation has been produced.  At  this point, the main concern is not
to   further  improve   the  classifier,   but  simply   to  correct   potential
mistakes. Therefore, the most crucial edges are those that are misclassified and
whose presence or absence most affects the topology of the delineation.  To
find them, we again compute the $\Delta  c$ value for each edge.  However, some
edges could  have a high $\Delta  c$ because they are  misclassified, even though they do not
influence the topology of the final delineation.

To focus on potential mistakes that do  affect the topology strongly, we rely on
the  DIADEM  score~\cite{Ascoli10}, which captures  the  topological  differences
between trees, such as connectivity changes and  missing or spurious branches. It ranges
from 0 to  1; the larger the  score, the more similar the two  trees are. More
specifically, let  $\bR^*$ be the  optimal tree given  the edge weights,  and let
$\bR'_i$ be the tree we obtain when changing the weight of edge $e_i$ from $w_i$
to $w'_i$, as described in  Section~\ref{sec:weights}. To measure the importance
of each edge, we compute the score
\begin{equation}
  s_i = \frac{\Delta c_i}{\mathrm{DIADEM}(\bR^*,\bR'_i)}
  \label{eq:diadem}
\end{equation}
and ask the user to check the highest-scoring one. The edge is assigned a weight
equal  to $A$  or $B$  from  Section~\ref{sec:weights} according  to the  user's
response. We then recompute $\bR^*$ and repeat the process. Note that this  is very  different  from  traditional proofreading  approaches, which require the user to visually inspect the whole image. By contrast, our user only
has to  give an  opinion about one edge  at a time, which is automatically selected and presented  to them.

\vspace{1cm}

\section{Fast Reconstruction of Curvilinear Structures}
\label{sec:mip}

To  delineate networks  of  curvilinear  structures, we  rely  on the  algorithm
of~\cite{Turetken16a}, which involves solving the following problem:

\begin{framed}
\begin{center}\textbf{Min-Weight Tree Containing $r$ (MinTree)} \end{center}
\begin{description}
  \item[\textnormal{\emph{Given:}}]A graph $\bG = (V,\bE)$, a root vertex $r \in V$, weights on edges $w : \bE \to \mathbb{R}$. Weights may be negative.
	
\item[\textnormal{\emph{Find:}}] A tree $\bR \subseteq \bG$ containing the vertex $r$, minimizing the sum of weights of picked edges $\sum_{e \in \bR} w(e)$.
  \end{description}
\end{framed}

In our approach, MinTree is used when we expect the ground-truth image to be a tree.
If such an assumption is not realistic (loopy networks, such as blood vessels), then we are instead interested in the following problem MinSubgraph:

\begin{framed}
\begin{center}\textbf{Min-Weight Connected Subgraph Containing $r$ (MinSubgraph)} \end{center}
\begin{description}
  \item[\textnormal{\emph{Given:}}]A graph $\bG = (V,\bE)$, a root vertex $r \in V$, weights on edges $w : \bE \to \mathbb{R}$. Weights may be negative.
	
\item[\textnormal{\emph{Find:}}] A connected subgraph $\bR \subseteq \bG$ which contains the vertex $r$ (and is not necessarily a tree), minimizing the sum of weights of picked edges $\sum_{e \in \bR} w(e)$.
  \end{description}
\end{framed}

Both problems are significantly harder than the Minimum Spanning Tree problem, because $\bR$ does not need to connect the entire graph
and also the weights may be negative.
In fact, both problems are NP-complete; we demonstrate this later in Proposition~\ref{prop:nphard}.
In both~\cite{Turetken16a} and our approach they are solved using a Mixed Integer Programming (\QMIP{}) formulation, which is given as input to the Gurobi solver.\footnote{\cite{Turetken16a} also introduce a more advanced algorithm, which uses a formulation with quadratic weights, i.e., weights on pairs of adjacent edges, rather than a linear weight function; this makes the computational burden even heavier.}

However, the previously considered formulation (see the model Arbor-IP in \cite{Turetken16a} and also the model M-DG in~\cite{BlumCalvo15}) has $|V||\bE|$ variables and as many constraints.
This makes solving it costly for small graphs and impossible for larger ones.
Our contribution is a new, linear-size \QMIP{} model for this problem.

In Section~\ref{sec:our_formulation} we introduce our formulation and argue about its correctness.
In Section~\ref{sec:hardness} we prove the NP-hardness of the considered problems.
The major running time improvements that the new formulation brings about are measured in Section~\ref{sec:running_time}.

Let us mention in passing that Blum and Calvo \cite{BlumCalvo15} also propose a ``matheuristic'' approach to solving MinTree -- although with no optimality guarantees.

\subsection{Our Formulation} \label{sec:our_formulation}

First, we describe how to obtain a \QMIP{} for MinTree.
We replace each undirected edge with two directed edges, so as to work with a directed graph.
Our objective is to find a directed tree whose each edge is directed away from the root $r$ (a so-called $r$-arborescence).

We associate a binary variable $x_{uv} \in \{0,1\}$ with each directed edge $(u,v) \in \bE$, denoting the presence of the edge in the solution $\bR$. The first two linear constraints to consider are:
\begin{itemize}
	\item any vertex $v$ has at most one incoming edge ($r$ has none) (see equations (\ref{eq:at_most_one_incoming}--\ref{eq:at_most_one_incoming_root}) below),
	\item an edge $(u,v)$ can be in the solution only if $u$ has an incoming edge in the solution (or $u = r$) \eqref{eq:only_if_has_incoming}.
\end{itemize}
These conditions almost require the solution to be an $r$-arborescence, but not quite; namely, there can still appear directed cycles (possibly with some adjoined trees). One way to deal with this issue is to enforce that every non-isolated vertex is connected to the root; this can be done using network flows. The constraints in the previous formulation require that, for every $v$ with an incoming edge, there should exist a flow $\{f_e^v\}_{e \in \bE}$ of value $1$ from $r$ to $v$. However, this leads to a large program ($|V||\bE|$ variables).

Our way around this is to instead require the existence of a single flow $\{f_e\}_{e \in \bE}$ from the source vertex $r$ to some set of sinks. The main constraints are that:
\begin{itemize}
	\item for every vertex $v \ne r$, if $v$ has an incoming edge (i.e., $v$ is not an isolated vertex in the solution, but is spanned by $\bR$), then the inflow into $v$ is at least $1$ more than the outflow (otherwise it is greater or equal to the outflow) \eqref{eq:flow_conservation},
	\item $f$ is supported only on the support of $x$ (that is, the flow $f$ only uses edges which are used by the solution $\bR$) \eqref{eq:flow_only_on_x}.
\end{itemize}
Since $x$ has no edges into the root, neither does $f$. Thus $f$ is indeed a flow (within the $x$-subgraph) from the source $r$ to the sink set being the set of all active vertices.

We write down our \QMIP{} formulation below.
We use the following notation: $x(F) = \sum_{e \in F} x(e)$ for a subset $F \subseteq \bE$, $\delta^+(v)$ is the set of (directed) edges outgoing from vertex $v$, and $\delta^-(v)$ is the set of (directed) edges incoming into vertex $v$. Thus e.g. $f(\delta^+(v))$ is the total $f$-flow outgoing from vertex $v$.

\begin{framed}
\begin{alignat}{3}
\text{minimize} & & \sum_{(u,v) \in \bE} & w(u,v) x_{uv} & & \nonumber  \\
\text{subject to} & & x_{uv} &\in \{0,1\} & & \qquad \forall (u,v) \in \bE \nonumber \\
& & x(\delta^-(v)) &\le 1 & & \qquad \forall v \in V \setminus \{ r \} \label{eq:at_most_one_incoming} \\
& & x(\delta^-(r)) &= 0 & & \label{eq:at_most_one_incoming_root} \\
& & x_{uv} &\le x(\delta^-(u)) & & \qquad \forall (u,v) \in \bE, u \ne r \label{eq:only_if_has_incoming} \\
& & f(\delta^-(v)) - f(\delta^+(v)) &\ge x(\delta^-(v)) & & \qquad \forall v \in V \setminus \{r\} \label{eq:flow_conservation} \\
& & f_{uv} &\ge 0 & & \qquad \forall (u,v) \in \bE \nonumber \\
& & f_{uv} &\le (|V|-1) \cdot x_{uv} & & \qquad \forall (u,v) \in \bE. \label{eq:flow_only_on_x}
\end{alignat}
\end{framed}

The following proposition explains the correctness of our formulation.

\begin{proposition} \label{prop:soundness}
For any $\bR \subseteq \bE$, the corresponding vector $x \in \{0,1\}^{\bE}$ is feasible for the \QMIP{} formulation\footnote{More precisely, there exists $f \in \mathbb{R}_+^{\bE}$ such that $(x,f)$ is feasible for the \QMIP{} formulation, where $x$ is obtained from $\bR$ by directing all edges to point away from~$r$.} iff $\bR$ is a tree containing the root $r$.
\end{proposition}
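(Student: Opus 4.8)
The statement is an ``if and only if'', so the plan is to prove the two directions separately; throughout I identify $x$ with its support, the set of directed edges $(u,v)$ with $x_{uv}=1$.

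For the direction asserting that a tree yields a feasible point, suppose $\bR$ is a tree containing $r$ and orient every edge away from $r$ to obtain $x$ (the unique orientation making $x$ an $r$-arborescence). Constraints \eqref{eq:at_most_one_incoming}--\eqref{eq:only_if_has_incoming} are then immediate from the defining properties of an arborescence: each vertex other than $r$ has exactly one parent, $r$ has in-degree zero, and any vertex emitting an edge is itself reachable from $r$ and so has a parent. The only real work is exhibiting a flow $f$ satisfying \eqref{eq:flow_conservation} and \eqref{eq:flow_only_on_x}, and here I would use the canonical \emph{subtree-size} flow: for each tree edge $(u,v)$ set $f_{uv}$ to the number of vertices in the subtree hanging below $v$, and $f_{uv}=0$ elsewhere. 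Then $f \ge 0$, and since any proper subtree avoiding $r$ has at most $|V|-1$ vertices, \eqref{eq:flow_only_on_x} holds. For \eqref{eq:flow_conservation}, at a spanned vertex $v$ the inflow equals the size of $v$'s subtree and the outflow equals the sum of the sizes of its children's subtrees, so the difference is exactly $1 = x(\delta^-(v))$; at an unspanned vertex both sides are $0$.

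For the converse, I would assume $(x,f)$ feasible and show the support of $x$ is an $r$-arborescence, so that $\bR$, its undirected image, is a tree through $r$. First I would read off structural facts: by \eqref{eq:at_most_one_incoming}--\eqref{eq:at_most_one_incoming_root} every in-degree is at most one and $r$ has in-degree zero, and by \eqref{eq:only_if_has_incoming} any vertex $u \ne r$ with positive out-degree has in-degree exactly one. The crux is ruling out directed cycles. If $Z$ were a directed cycle, every vertex of $Z$ differs from $r$ and, having its cycle-predecessor as incoming edge, has in-degree exactly one; hence $Z$ has \emph{no} incoming edges from outside $Z$. Since \eqref{eq:flow_only_on_x} confines $f$ to the support of $x$, no flow enters $Z$ from outside, so summing \eqref{eq:flow_conservation} over $v \in Z$ (where the internal-edge contributions cancel) gives $(\text{flow entering }Z) - (\text{flow leaving }Z) = -(\text{flow leaving }Z) \le 0$, whereas the right-hand side sums to $\sum_{v \in Z} x(\delta^-(v)) = |Z| > 0$, a contradiction. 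Thus the support is acyclic; an acyclic digraph with all in-degrees at most one is a branching, and \eqref{eq:only_if_has_incoming} forces every in-degree-zero vertex other than $r$ to be isolated, leaving a single arborescence rooted at $r$ together with isolated vertices, whose edge set is a tree containing $r$.

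The arborescence bookkeeping in the forward direction is routine; the step I expect to carry the real weight is the acyclicity argument in the converse, namely recognizing that summing \eqref{eq:flow_conservation} around a cycle exposes the contradiction between a strictly positive required net outflow (one unit per spanned vertex, hence $|Z|$ in total) and the zero external inflow forced by the in-degree bound. A secondary point to treat with care is the constant $|V|-1$ in \eqref{eq:flow_only_on_x}: it must be large enough to accommodate the subtree-size flow in the forward direction, yet it never becomes binding in the converse argument.
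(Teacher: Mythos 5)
Your proof is correct and follows essentially the same route as the paper's: the same single-flow certificate (your subtree-size flow is exactly the flow obtained by routing one unit from $r$ to each spanned vertex, as the paper does) and the same key contradiction obtained by summing the flow-conservation constraint \eqref{eq:flow_conservation} over a vertex set that receives no $x$-edges, hence no flow, from outside. The only organizational difference is that you apply this summation to directed cycles and then dispose of stray arborescences not rooted at $r$ via \eqref{eq:only_if_has_incoming}, whereas the paper sums over an entire connected component avoiding $r$ in one step (and observes that \eqref{eq:only_if_has_incoming} is actually redundant).
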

\begin{proof}
($\Longrightarrow$)
By \eqref{eq:at_most_one_incoming}, edges $(u,v)$ with $x_{uv} = 1$ form a (directed) subgraph where every vertex has indegree at most 1.
It is not hard to see that each connected component of such a graph is either a tree or a cycle (possibly with adjoined trees); the cycle case is impossible if the component contains $r$ (by \eqref{eq:at_most_one_incoming_root}).
We show that actually there is no connected component except the one containing $r$.
Towards a contradiction suppose that $S \subseteq V \setminus \{r\}$ is such a component;
we will show that the flow conservation constraints \eqref{eq:flow_conservation} must be violated.
Denote by $\delta^+(S) = \{ (u,v) \in \bE : u \in S, v \not \in S \}$ the outgoing edges of $S$,
and by $\delta^-(S)$ the incoming edges.
We have $x(\delta^+(S)) = x(\delta^-(S)) = 0$ and thus, by \eqref{eq:flow_only_on_x},
$f(\delta^+(S)) = f(\delta^-(S)) = 0$.
However, by summing up \eqref{eq:flow_conservation} over $v \in S$ we get $f(\delta^-(S)) - f(\delta^+(S)) \ge \sum_{v \in S} x(\delta^-(v))$; the left side is $0$ but the right side is positive, a contradiction.\footnote{The observant reader will notice that the constraint \eqref{eq:only_if_has_incoming} is redundant. However, we keep it for clarity of exposition and because it makes solving the program faster in practice.}

($\Longleftarrow$)
It is easy to see that constraints (\ref{eq:at_most_one_incoming}--\ref{eq:only_if_has_incoming}) are satisfied by $x$.
To obtain the flow, we begin with $f = 0$.
Then, for each vertex $v$ with $x(\delta^-(v)) = 1$, we route $1$ unit of flow from $r$ to $v$ inside $\bR$ (that is, we only use edges $e$ with $x_e = 1$) and add that flow to $f$.
(This is possible since $\bR$ is connected.)
This way we will satisfy \eqref{eq:flow_conservation}.
Since the number of such vertices is at most $|V|-1$, any edge will hold at most $|V| - 1$ units of flow, thus satisfying \eqref{eq:flow_only_on_x}.
\end{proof}

So far we have discussed MinTree.
To get a formulation for MinSubgraph, one only needs to omit the constraint \eqref{eq:at_most_one_incoming} and adjust the constraint \eqref{eq:flow_only_on_x} to become $f_{uv} \le |\bE| \cdot x_{uv}$.
Then $x$ is obtained from $\bR$ by choosing any spanning tree of $\bR$ and orienting tree edges to point away from $r$ and non-tree edges arbitrarily.
In the proof of Proposition~\ref{prop:soundness} we route $x(\delta^-(v))$ units of flow (rather than $1$ unit) for each $v$
(now any edge holds at most $|\bE|$ units of flow).
These are the only changes.

\subsection{Hardness} \label{sec:hardness}

In this section we argue that our problems are extremely unlikely to be solvable in polynomial time.
This makes solving \QMIP{} formulations
using state-of-the-art solvers
one of the most natural and efficient methods available.

\begin{proposition} \label{prop:nphard}
The problems MinTree and MinSubgraph are NP-complete.
\end{proposition}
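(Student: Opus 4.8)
The plan is to establish NP-completeness of the natural decision versions of both problems, where in addition to the input we are given a target $k \in \mathbb{R}$ and ask whether there is a feasible $\bR$ with $\sum_{e \in \bR} w(e) \le k$. Membership in NP is immediate: a candidate subgraph $\bR$ serves as a polynomial-size certificate, and one checks in polynomial time that $\bR$ contains $r$, is connected (and, for MinTree, acyclic), and has total weight at most $k$. The substance is therefore the hardness, for which I would reduce from the classical NP-complete \emph{Steiner Tree} problem: given a graph $H = (V_H, E_H)$ with nonnegative edge costs $c$, a terminal set $T \subseteq V_H$, and a budget $B$, decide whether some tree of $H$ connects all of $T$ with cost at most $B$.

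The idea of the reduction is to use the negativity of the weights (the very feature that makes MinTree nontrivial) to force each terminal into the solution. First I would build $\bG$ from $H$ by keeping all edges of $H$ with their nonnegative weights and, for every terminal $t \in T$, attaching a fresh pendant vertex $t'$ joined to $t$ by a single ``reward'' edge of weight $-M$, where $M := \max\{\sum_{e \in E_H} c(e),\, B\} + 1$. I would pick the root $r$ to be an arbitrary terminal and set the target to $k := B - |T| \cdot M$. The intuition is that each reward edge contributes a bonus of $M$ that can only be collected by including its terminal, and $M$ is chosen so large that collecting every reward always dominates any conceivable saving on the original edges.

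The heart of the argument---and the step I expect to require the most care---is showing that a feasible solution of weight at most $k$ exists iff a Steiner tree of cost at most $B$ does. The forward direction is easy: a Steiner tree $\bS$ connecting $T$ yields the tree $\bS \cup \{(t,t') : t \in T\}$, which contains $r$ and has weight $c(\bS) - |T|M \le B - |T|M = k$. For the converse I would argue that any $\bR$ of weight at most $k$ must in fact collect all $|T|$ rewards: if it missed even one, its weight would be at least $-(|T|-1)M$, whereas $k = B - |T|M < M - |T|M = -(|T|-1)M$ by the choice $M > B$; since the pendant vertices are leaves and cannot contribute to connectivity, all terminals must then be spanned through $H$-edges. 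Deleting the reward edges from such an $\bR$ leaves a tree of $H$ connecting every terminal, whose cost is $\mathrm{weight}(\bR) + |T|M \le k + |T|M = B$, as required.

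Finally, I would observe that the same construction settles MinSubgraph with no extra work. Since every original edge has nonnegative weight and every reward vertex is a leaf, the cheapest connected subgraph spanning the forced terminals contains no superfluous cycle---any edge lying on a cycle can be removed without disconnecting the graph and without increasing the cost---so the optimum is again a Steiner tree plus the reward edges. Hence the identical target $k$ works verbatim, and both MinTree and MinSubgraph are NP-complete.
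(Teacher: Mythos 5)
Your proof is correct and follows essentially the same route as the paper's: a reduction from Steiner tree that attaches a pendant ``reward'' edge of weight $-M$ to each terminal, with $M$ large enough that every reward must be collected. The only differences are cosmetic --- you make the decision-version threshold $k = B - |T|M$ and the NP-membership check explicit, and you root at a terminal rather than at one of the new pendant vertices, both of which work equally well.
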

\begin{proof}
Clearly both are in NP.
We will show an NP-hardness reduction from the Steiner tree problem in graphs (STP), which is a well-known NP-hard problem.
An instance of STP consists of a graph $G = (V,E)$ with weights on edges $w : E \to \mathbb{R}_+$ and a set of terminal vertices $T \subseteq V$.
The objective is to find a minimum-weight tree in $G$ which connects the set $T$.
To obtain an instance of MinTree (or MinSubgraph) from STP,
we do the following for each $t \in T$: adjoin a new vertex $t'$ to $t$ using a new edge $(t,t')$ of weight $-M$,
where $M$ is a very large weight (say $M = 1 + \sum_{e \in E} |w(e)|$).
Then set the root $r$ to be any of these new vertices.

To see that an optimal solution of the MinTree instance corresponds to an optimal solution of the STP instance,
note that the former must necessarily contain all the new edges
(as we set their weight to be so low that it makes sense to select them even if it requires us to also select many positive-weight edges).
Since the MinTree solution must be connected, it will therefore connect all the terminal vertices;
removing the new edges from the MinTree solution gives an optimal STP solution.
(The same reduction also works for MinSubgraph, since 
the weights of all original edges are positive
and thus
the optimal solution for MinSubgraph is the same as the optimal solution for MinTree.)
\end{proof}


\section{Results}
\label{sec:results}

\begin{figure*}[]
\centering
\subfloat[]{\includegraphics[height=0.23\textwidth]{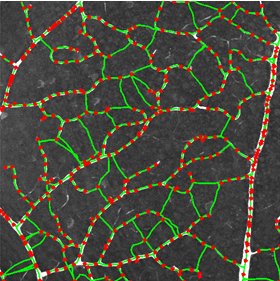}}
\hspace{0.05cm}
\subfloat[]{\includegraphics[height=0.23\textwidth]{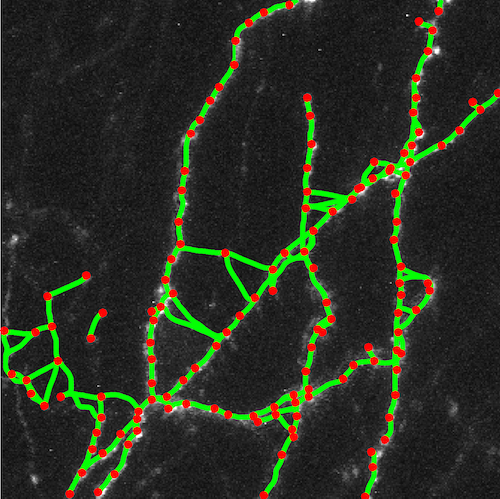}}
\hspace{0.05cm}
\subfloat[]{\includegraphics[height=0.23\textwidth]{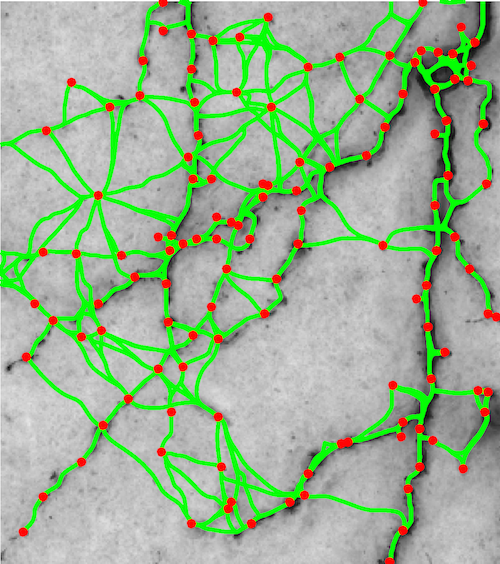}}
\hspace{0.05cm}
\subfloat[]{\includegraphics[height=0.23\textwidth]{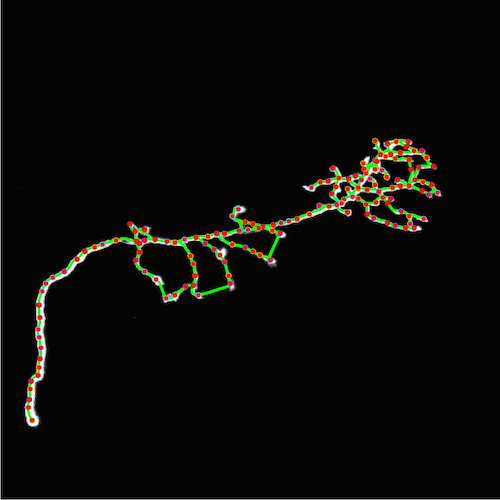}}
\vspace{-4mm}
\caption{Dataset   images   with   the   over-complete   graphs overlaid.   (a)
  \textit{Blood Vessels.} (b) \textit{Axons.} (c) \textit{Brightfield Neurons}. (d) \textit{Olfactory Projection Fibers.}}
\label{fig:datasets}
\end{figure*}
We tested our approach on 3-D  image stacks depicting retinal blood vessels, rat
brain axons and  dendrites, and drosophila olfactory  projection fibers obtained
using either 2-photon or brighfield microscopes, shown in Fig.~\ref{fig:datasets}.
We  rely on  the algorithm  of~\cite{Turetken16a} for  the initial  overcomplete
graphs, the  corresponding edge features  and the final delineations. To classify edges as being likely to be part  of an extended linear
structure or  not on  the basis of  local image  evidence, we  use Gradient
Boosted  Decision Trees~\cite{Becker13b}.

\subsection{Fast Reconstruction}
\label{sec:running_time}

The runtimes of our formulation compared to the one presented in~\cite{Turetken16a} are shown in Table~\ref{table:speed-up}.
The optimization was executed on a 2x Intel E5-2680 v2 system (20 cores). Our formulation can be solved under 6 seconds for all real-world graph examples we have tried; the maximum for the formulation of~\cite{Turetken16a} is over 6 minutes.

\begin{table}
\centering
\begin{tabular}{|c|c|c|c|c|c|c|}
 \hline
  & \textit{Axons1} & \textit{Axons2} & \textit{Axons3} & \textit{Axons4} & \textit{Axons5} & \textit{Axons6}\\
  \hline
  \# edges & 164 & 223 & 224 & 265  & 932 & 2638 \\
  \hline
  \QMIP{} \cite{Turetken16a} & 0.91 & 1.04 & 1.19 & 1.45 & 78.3 & 393.7\\
  \hline
  \QMIP{} ours & 0.03 & 0.10 & 0.04 & 0.23 & 0.10 & 5.23 \\
  \hline
  speedup & 26.1x & 10.1x & 27.3x & 6.3x & 743.5 & 75.2x \\
  \hline
\end{tabular}

\begin{tabular}{|c|c|c|c|c|c|c|}
 \hline
  & \textit{BFNeuron1} & \textit{BFNeuron2} &\textit{OPF1}& \textit{OPF2}& \textit{BFNeuron3} & \textit{BFNeuron4}  \\
  \hline
  \# edges & 120 & 338 & 363 & 380 & 645 & 2826 \\
  \hline
  \QMIP{} \cite{Turetken16a}  &  0.48 & 2.25 &1.53 & 1.65& 2.13 & 308.23 \\
  \hline
  \QMIP{} ours  & 0.02 & 0.12 & 0.05 & 0.08& 0.26 & 2.30  \\
  \hline
  speedup & 18.2x & 17.7x & 29.4x & 19.9x& 8.1x  & 134.0x \\
  \hline
\end{tabular}
\caption{Per-reconstruction runtimes (in seconds) of the \QMIP{} formulation of~\cite{Turetken16a} and ours for the proofreading task.}
\label{table:speed-up}
\end{table}

We also compared the runtimes on randomly generated graphs of various sizes -- see Table~\ref{table:random}. The speed-ups remain similar. In Table~\ref{table:random2} we collect runtimes of our method on larger randomly generated graphs.
If we assumed (more or less arbitrarily) 2 seconds to be the threshold of what is practical in an interactive setting (given that this optimization needs to be run multiple times), then
we can see that the method of~\cite{Turetken16a} can deal with graphs of size at most 300, whereas our method copes with graphs having around 2000 edges.

\begin{table}
\centering
\begin{tabular}{|c|c|c|c|c|c|c|c|c|c|} 
  \hline
  \# edges & 99 & 132 & 220 & 330 & 440 & 660 & 924 & 1320 & 1540 \\
  \hline
  \QMIP{} \cite{Turetken16a} & 0.16 & 0.30 & 1.13 & 3.39 & 8.35 & 29.35 & 73.16 & 112.59 & 149.01 \\
  \hline
  \QMIP{} ours & 0.03 & 0.04 & 0.06 & 0.12 & 0.15 & 0.29 & 0.36 & 0.67 & 0.42 \\
  \hline
  speedup & 6.1x & 7.5x & 17.9x & 29.4x & 53.9x & 102.8x & 201.8x & 167.8x & 348.1x \\
  \hline
\end{tabular}
\caption{Per-reconstruction runtimes (in seconds) of the \QMIP{} formulation of~\cite{Turetken16a} and ours on random graphs.}
\label{table:random}
\end{table}

\begin{table}
\centering
\begin{tabular}{|c|c|c|c|c|c|c|}
  \hline
  \# edges & 1760 & 2420 & 3520 & 4400 & 5720 & 9900 \\
  \hline
  \QMIP{} ours & 1.60 & 2.71 & 6.59 & 9.57 & 15.52 & 81.55 \\
  \hline
\end{tabular}
\caption{Per-reconstruction runtimes (in seconds) of our \QMIP{} formulation on random graphs.}
\label{table:random2}
\end{table}

One further practical method for speeding up the solver is to initialize it with a nonzero feasible solution. In cases where we needed to explore a large number of reconstructions resulting from altering just one weight at a time (which was the setting of our paper), we initialized the new solution to the current optimal solution.
Note that this scenario makes performance considerations especially relevant,
as $|\bE|$ reconstructions need to be made;
even though they can be run in parallel,
a high running time of a single \QMIP{} solution would make the approach impractical.

\subsection{Active Learning}

\begin{figure*}[t!]
\centering
\subfloat[]{\includegraphics[width=0.42\textwidth]{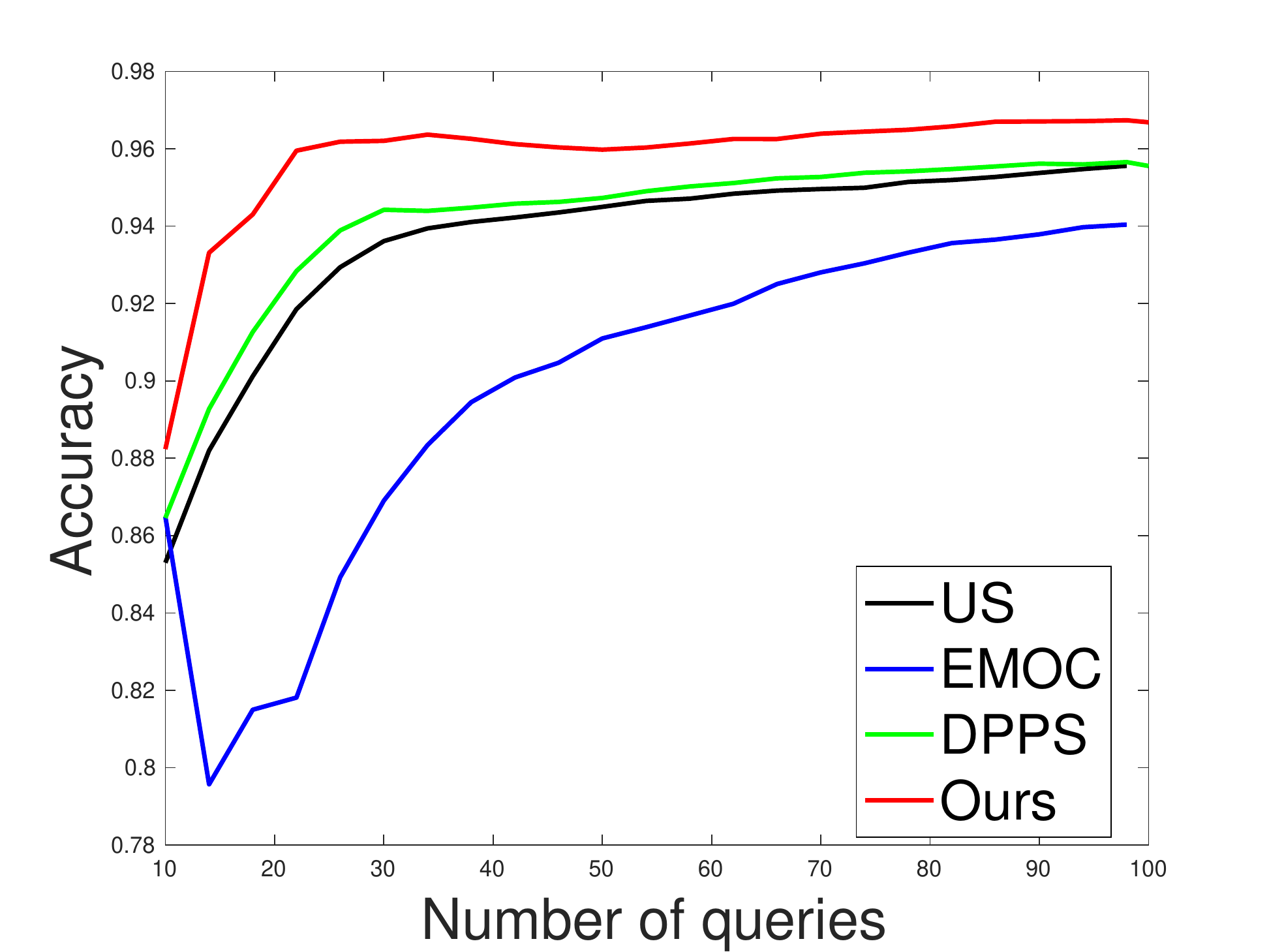}}
\subfloat[]{\includegraphics[width=0.42\textwidth]{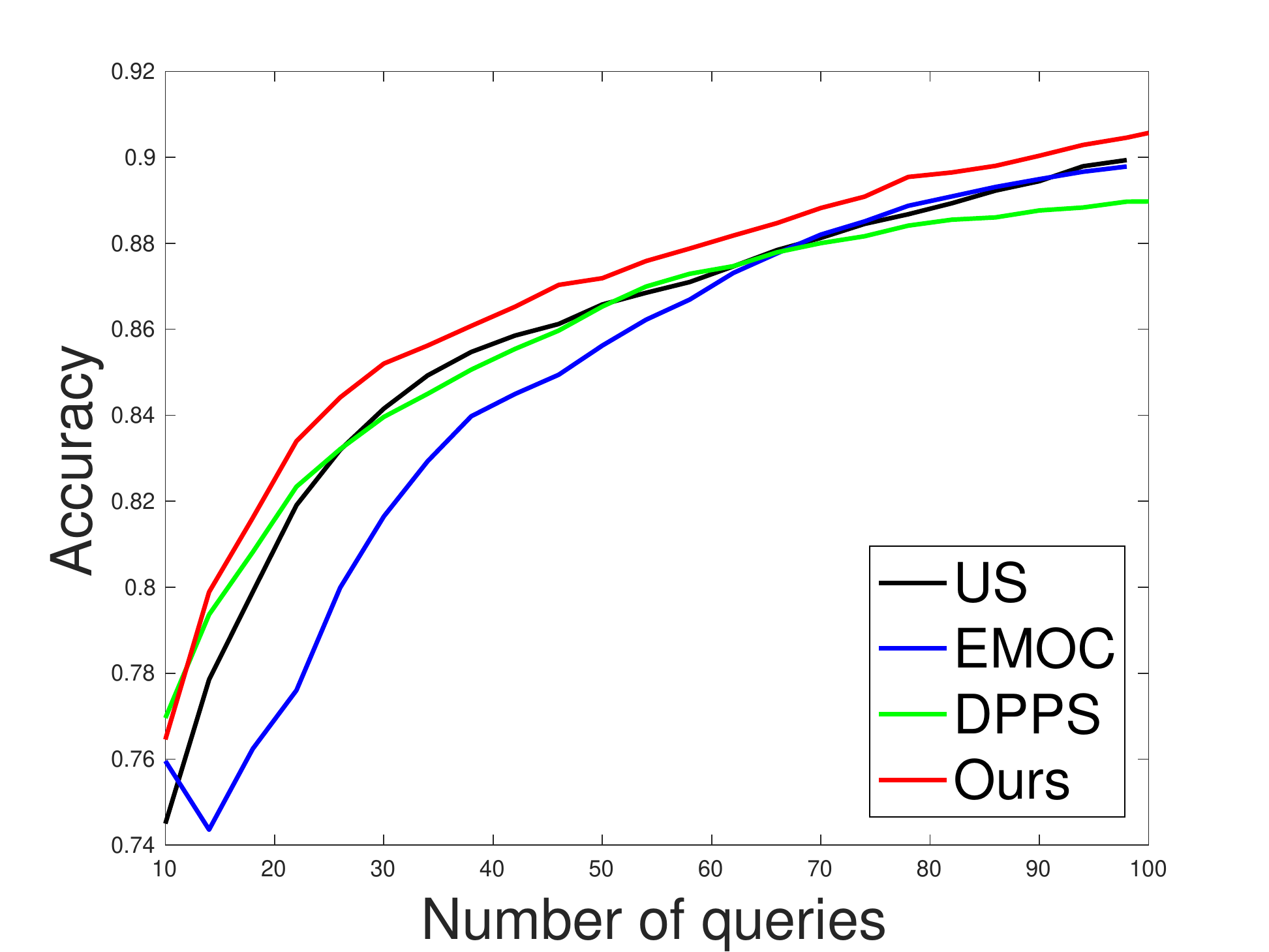}} \\  \vspace{-0.4cm}
\subfloat[]{\includegraphics[width=0.42\textwidth]{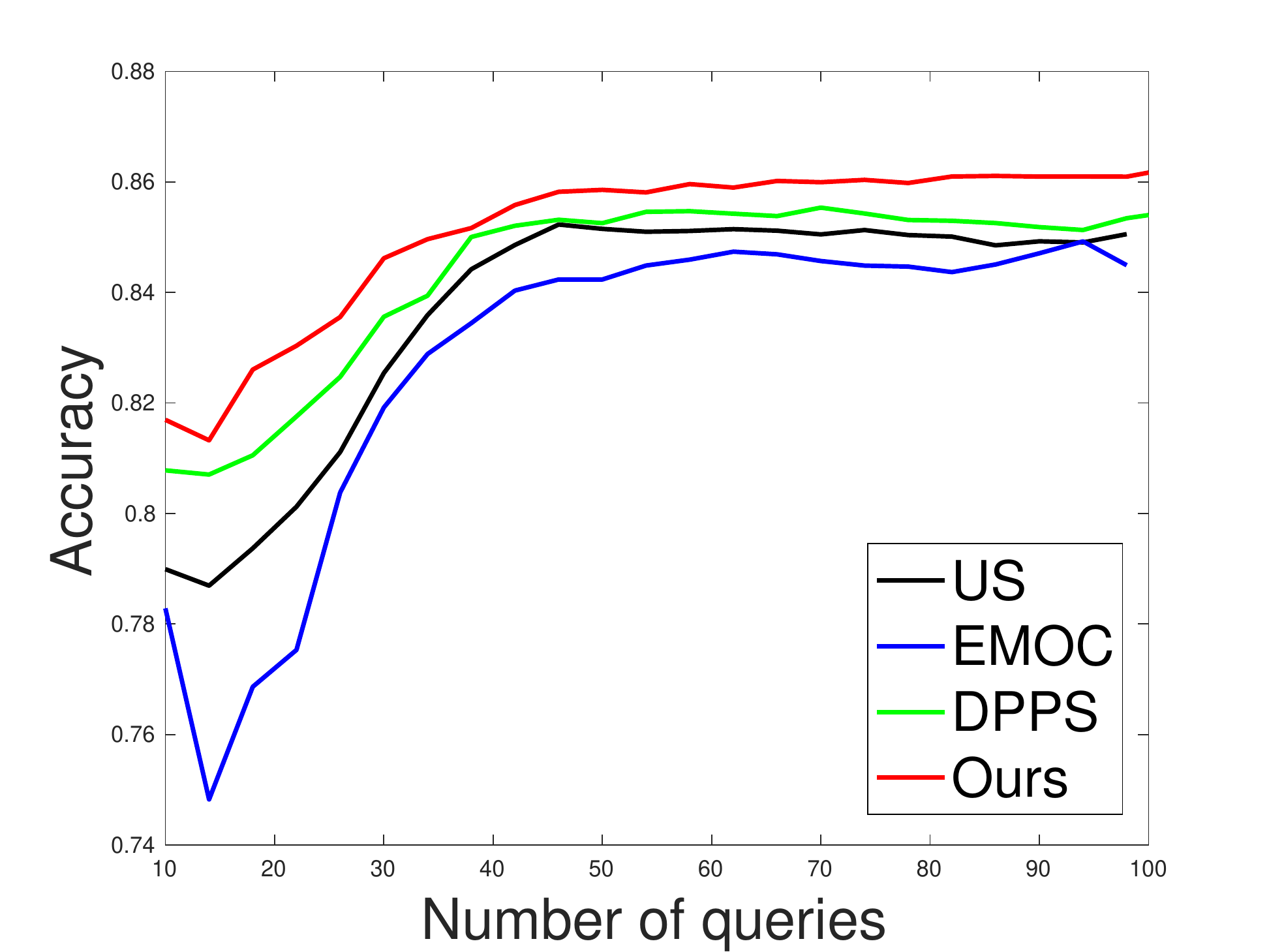}}
\subfloat[]{\includegraphics[width=0.42\textwidth]{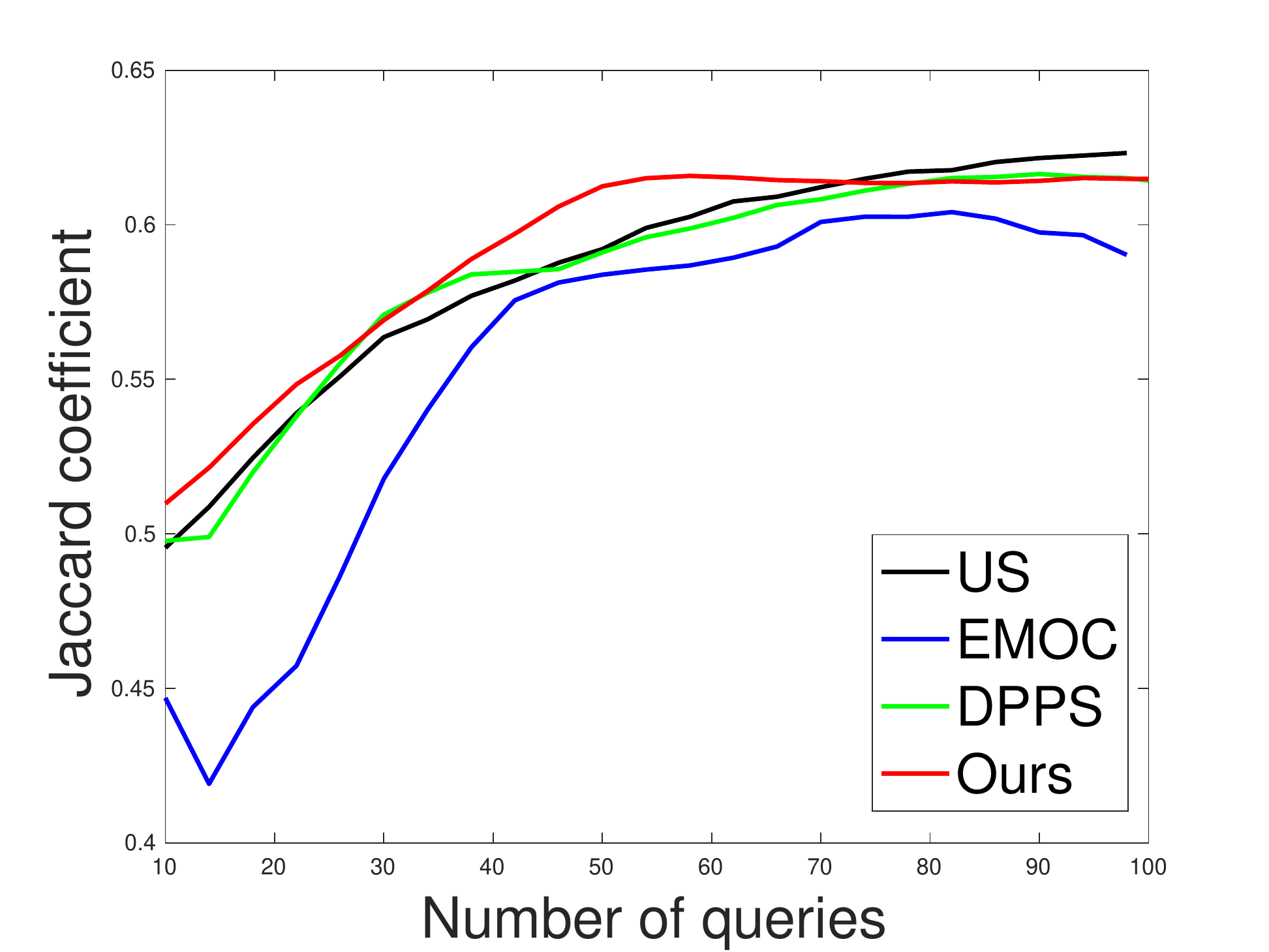}}
\caption{Active  Learning. Accuracy  as a  function of  the number  of annotated
  samples.     (a)~\textit{Blood    vessels.}    (b)~\textit{Axons.}    (c)
  \textit{Brightfield  neurons}.   (d)   \textit{Olfactory  Projection  Fibers.}
  The red curve denoting our approach is always above the others, except
    in the right-hand side of (d): because this is a comparatively easy case, the
    delineation  stops changing  after some time and  error-based queries  are no
    longer informative.}
\label{fig:ALresults}
\end{figure*}

For each image, we start with  an overcomplete graph.  The initial classifier is
trained using  10 randomly  sampled examples.   Then, we query  four edges  at a
time, as  discussed in Section~\ref{sec:active},  which allows us to  update the
classifier often enough  while decreasing the computational cost.  We report     results     averaged    over     30    trials     in
Fig.~\ref{fig:ALresults}. Our  approach outperforms  both naive methods  such as
Uncertainty  Sampling  (\US{})  and  more  sophisticated  recent  ones  such  as
\DPS{}~\cite{Mosinska16}  and  \EMOC{}~\cite{Freytag14}.    \DPS{}  is  designed
specifically for delineation  and also relies on uncertainty  sampling, but only
takes local topology into account  when evaluating this uncertainty.  \EMOC{} is
a more  generic method  that aims  at selecting samples  that have  the greatest
potential to change the output.

In Fig.~\ref{fig:MSTvsQMIP} we can see that using \QMIP{} formulations indeed helps improve the AL results, compared to a more basic method Minimum Spanning Tree with Pruning~\cite{Gonzalez08} (\MSTP{}), as it produces more accurate reconstructions and thus we can more reliably detect mistakes. This is visible especially in case of \textit{Blood Vessels}, which in reality can form loops. Those can be reconstructed using MinSubgraph \QMIP{}, but not with \MSTP{}.

\begin{figure*}[]
\centering
\subfloat[]{\includegraphics[height=0.33\textwidth]{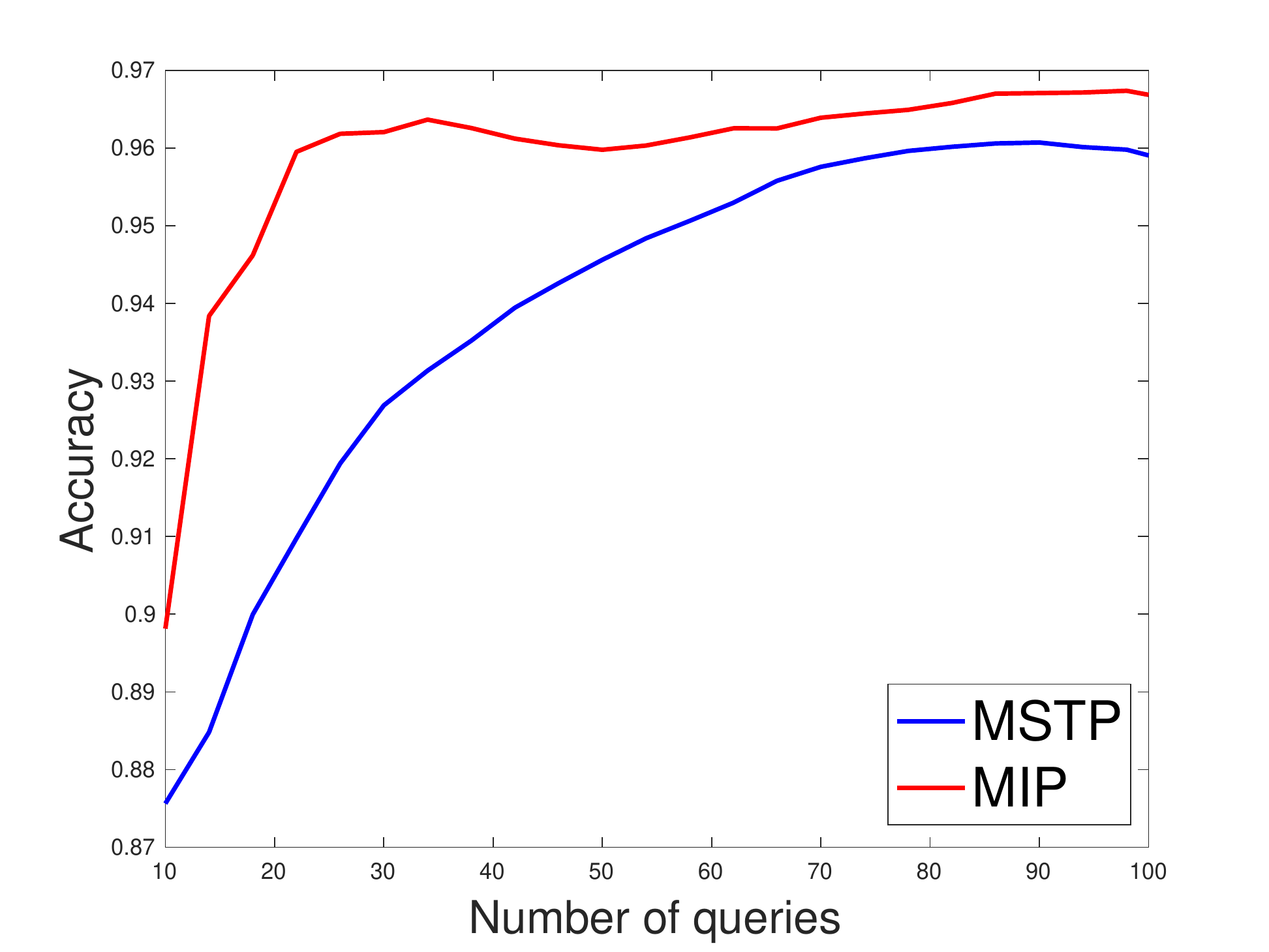}}
\subfloat[]{\includegraphics[height=0.33\textwidth]{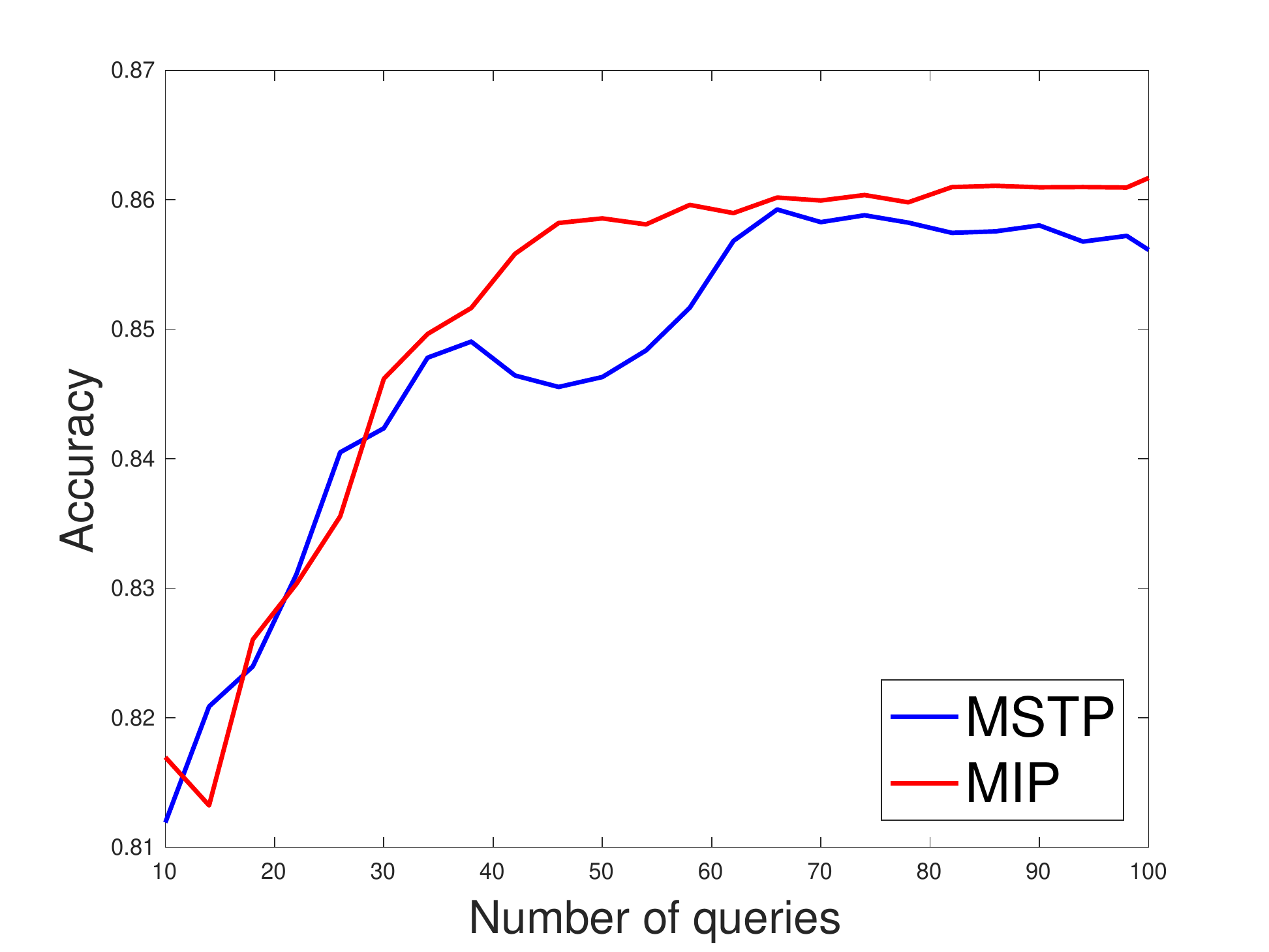}}
\caption{Comparison of our AL strategy when using \MSTP{} and \QMIP{}. (a)
  \textit{Blood Vessels.} (b) \textit{Brightfield Neurons.}} 
\label{fig:MSTvsQMIP}
\end{figure*}

\subsection{Proofreading}

\begin{figure*}[]
\centering
\subfloat[]{\includegraphics[height=0.33\textwidth]{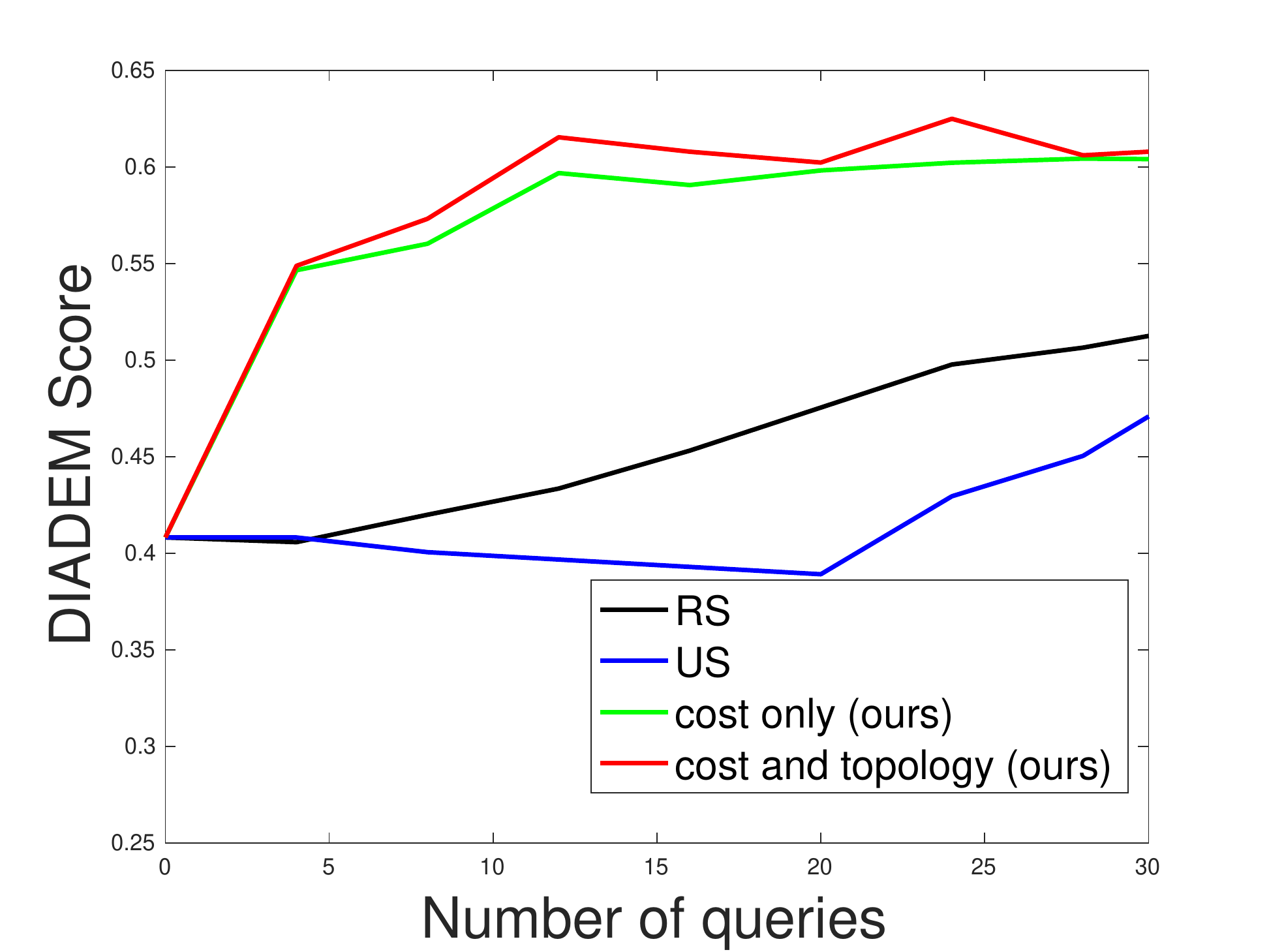}}
\subfloat[]{\includegraphics[height=0.33\textwidth]{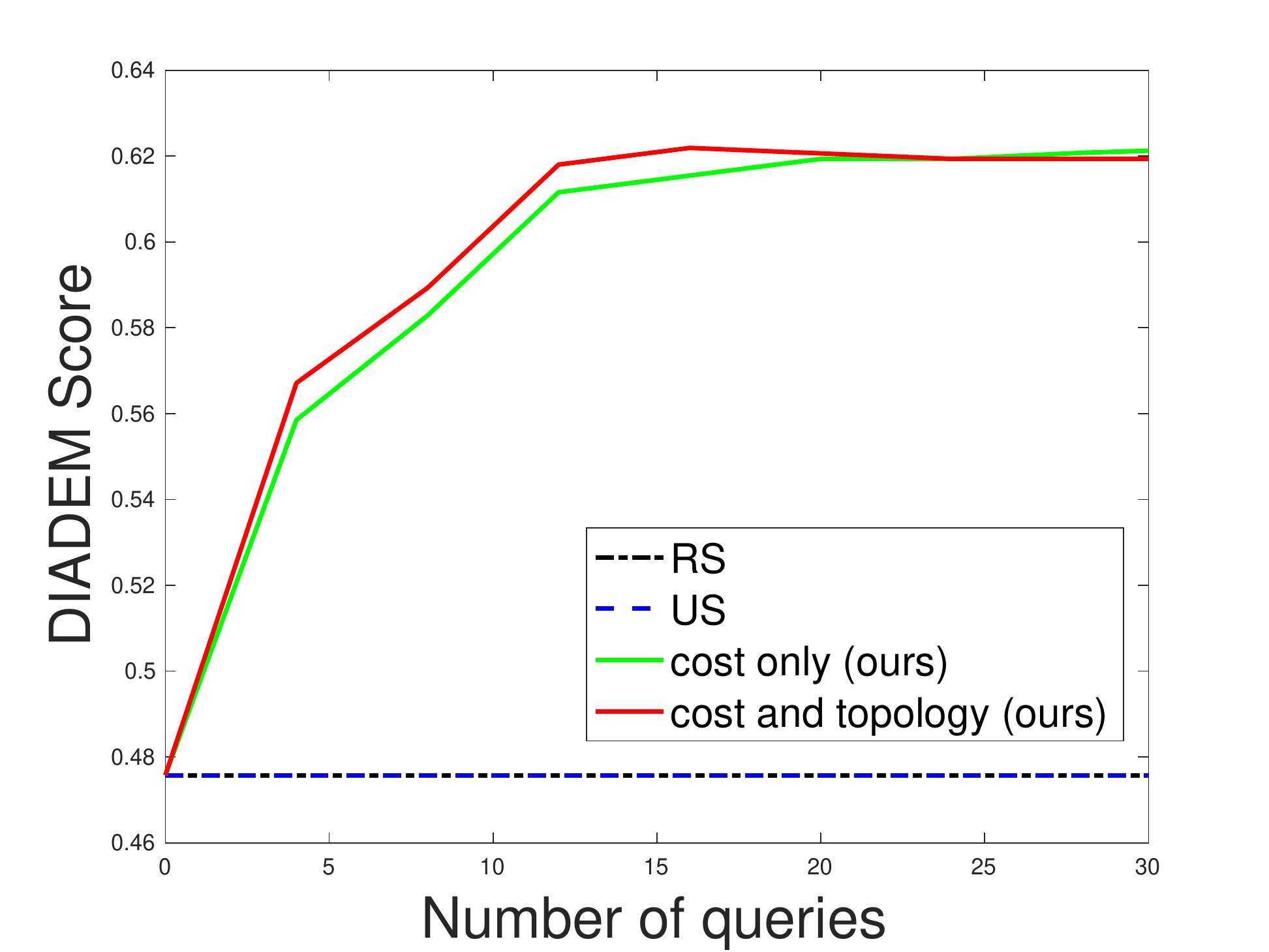}} \\ \vspace{-0.4cm}
\subfloat[]{\includegraphics[height=0.33\textwidth]{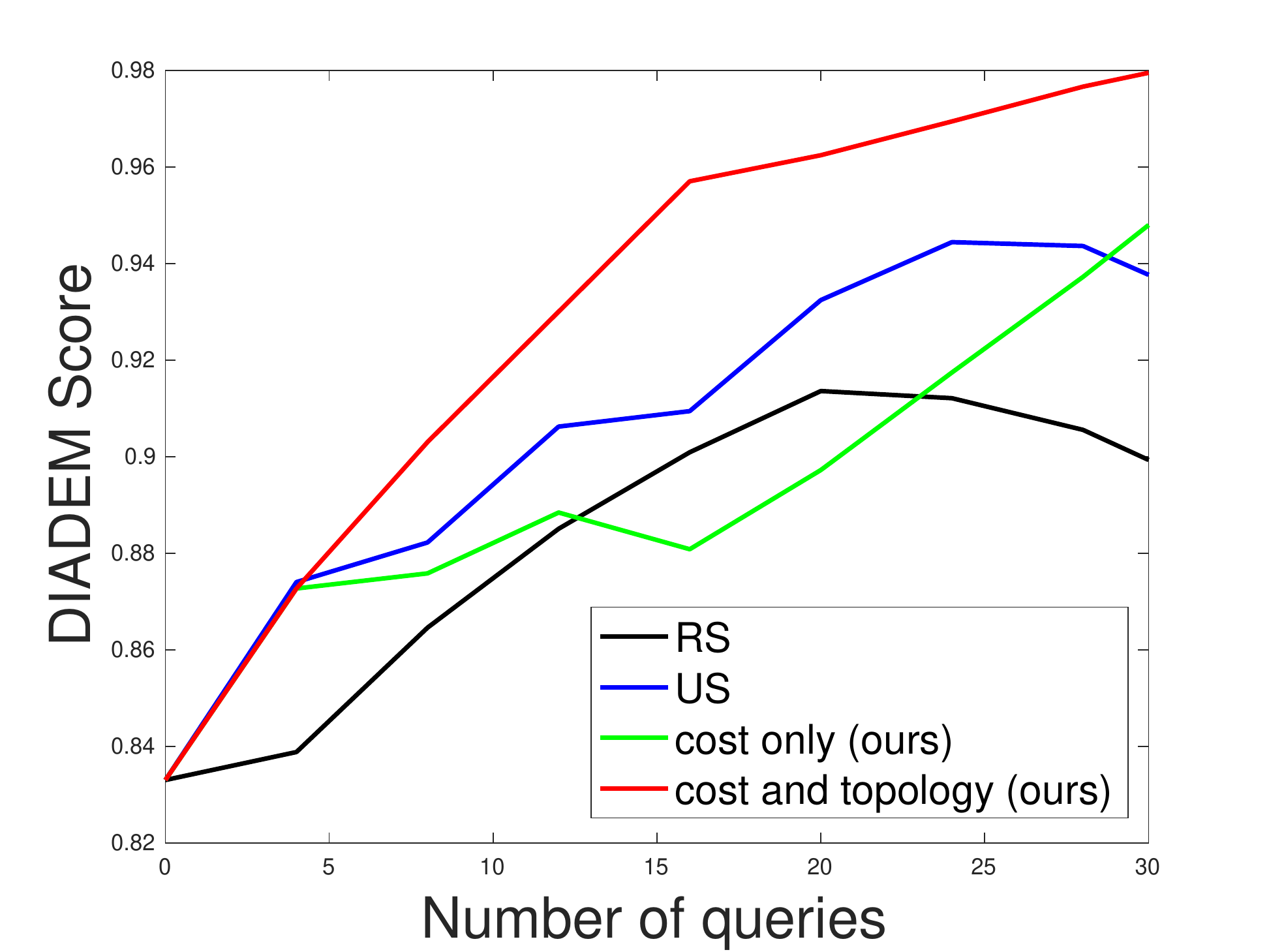}}
\subfloat[]{\includegraphics[height=0.33\textwidth]{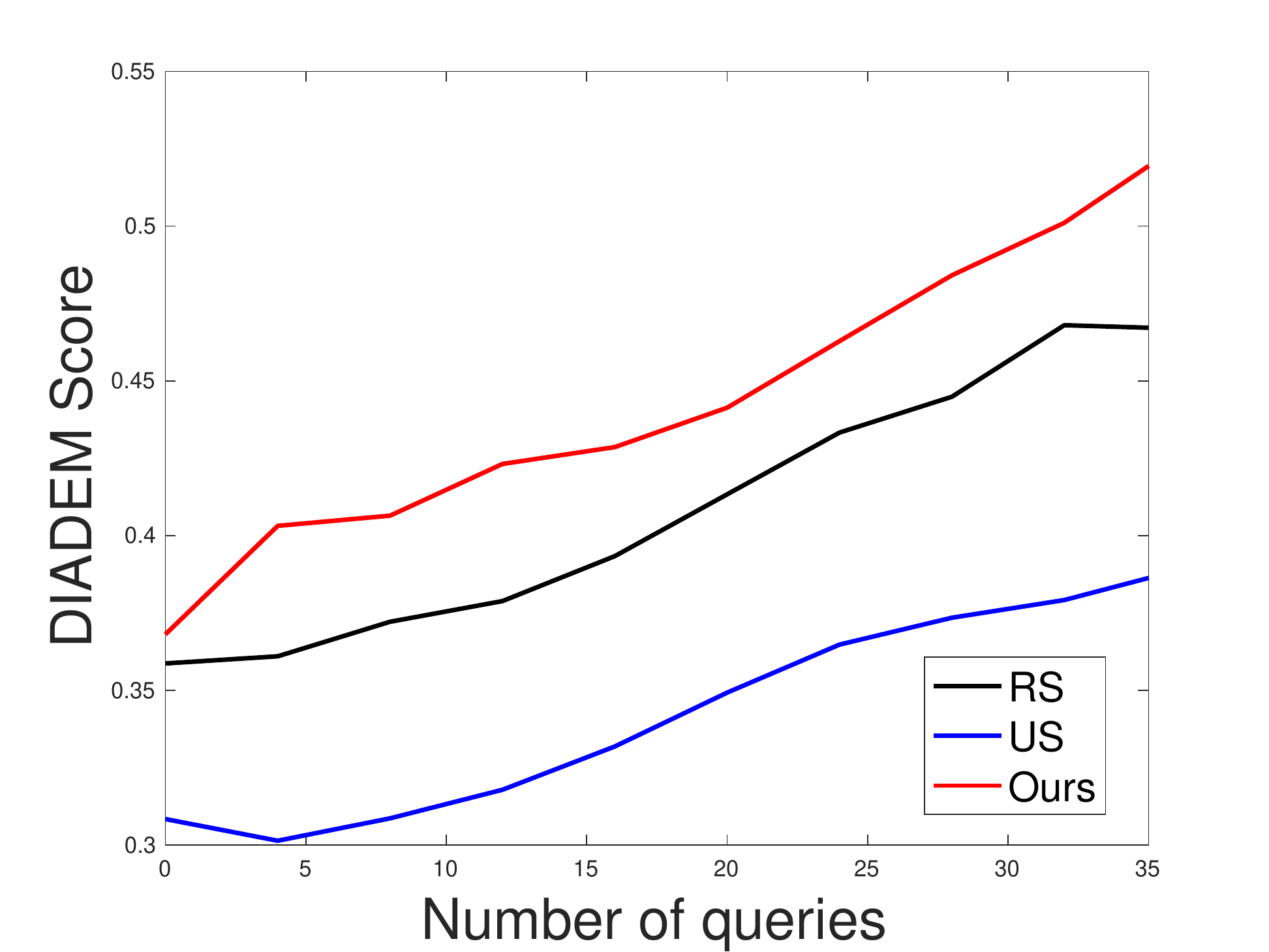}}
\caption{Focused  proofreading.  DIADEM  score as  a function  of the  number of
  paths examined  by the  annotator. (a) \textit{Axons.}  (b) \textit{Brightfield
    Neuron}.  (c)  \textit{Olfactory Projection  Fibers}.  (d)  Combined AL  and
  proofreading for \textit{Axons}.}
\label{fig:proof}
\end{figure*}

\begin{figure}
\centering
\includegraphics[width=0.99\textwidth]{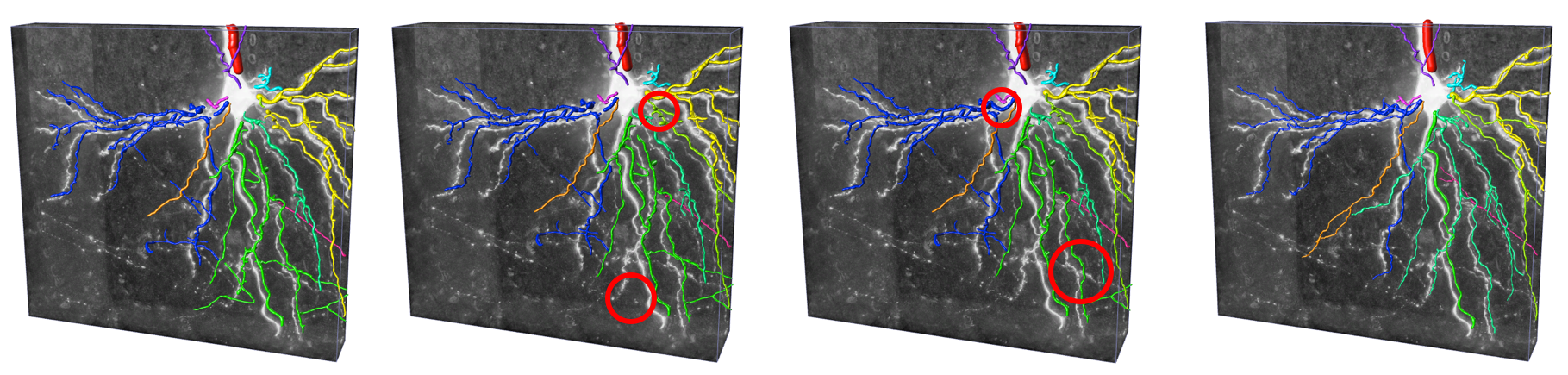}
\caption{Proofreading.  From  left to  right: initial  delineation, delineations
  after 10 and 20 corrections, and ground truth.}
\label{fig:proofSeq}
\end{figure}

For each  test image,  we compute  an overcomplete graph  and classify  its edges
using a classifier trained  on 20000 samples.  We then find four edges with
the highest values of the score $s_i$ of Eq.~\ref{eq:diadem} and present them to
the  user  for  verification.   Their  feedback   is  then  used  to  update  the
delineation.

The red curves of Fig.~\ref{fig:proof}(a-c) depict the increase in DIADEM score.
Rapid   improvement  can   be  seen   after   as  few   as  15   corrections.
Fig.~\ref{fig:proofSeq}  shows how  the reconstruction  evolves in  a specific
case. For analysis  purposes, we also reran the experiment  using the $\Delta c$
criterion  of   Eq.~\ref{eq:weightMetric}  (cost-only)   instead  of   the  more
sophisticated one of Eq.~\ref{eq:diadem} (cost and topology) to choose the paths
to  be  examined.  The  green  curves  in Fig.~\ref{fig:proof}(a-c)  depict  the
results.   They   are  not as good,   particularly  in  the   case  of
Fig.~\ref{fig:proof}(c), because the highest-scoring mistakes are often the ones
that tend to  be in the \QMIP{} reconstruction both  before and after correcting
mistakes.  It  is therefore  only by  combining both cost  and topology  that we
increase the chances that a potential  correction of the selected edge will improve the reconstruction. By
  contrast, paths chosen by \RS{} and \US{} are not necessarily erroneous or in
  the immediate neighborhood of the tree. As a result, investigating them often does
  not give any improvements.

\subsection{Complete Pipeline}
\label{sec:complete}

 In  a working  system, we  would integrate  AL and  proofreading into  a single
 pipeline.  To gauge its potential efficiency, we selected 50 edges to train our
 classifier  using the  AL strategy  of Section~\ref{sec:active}.  We then  computed a delineation in a test  image and proofread it by selecting  35 edges.  For comparison purposes, we
 used either our  approach as described in Section~\ref{sec:active}, \RS{}, or \US{} to
 pick  the  edges for training and then for verification.   In  Fig.~\ref{fig:proof}(d) we  plot  the
 performance  (in terms  of the  DIADEM score  of the  final delineation  and of the
 ground truth) as a function of the total  number of edges the user needed to
 label manually. 
 


\section{Conclusions}\label{sec:conclusion}

We have  presented an attention  scheme that significantly reduces  the annotation
effort involved both in creating training data for supervised Machine Learning and
in proofreading  results for delineation tasks.   It does so by  detecting possibly
misclassified samples  and considering  their influence on  the topology  of the
reconstruction. We  showed  that  our  method  outperforms  baselines  on  a variety of
microscopy image stacks and can be used in interactive applications thanks to its efficient formulation.

\bibliographystyle{alphaurl}
\bibliography{short,learning,biomed,vision}
\end{document}